\documentclass{article}

\usepackage{arxiv}
\usepackage[utf8]{inputenc}
\usepackage[T1]{fontenc}
\usepackage{amsmath,amssymb,amsthm,mathtools}
\usepackage{booktabs}
\usepackage{microtype}
\usepackage[round]{natbib}
\usepackage[colorlinks=true,citecolor=blue,linkcolor=blue,urlcolor=blue]{hyperref}

\newcommand{\Fone}{\texorpdfstring{\ensuremath{F_1}}{F1}}
\renewcommand{\headeright}{Preprint}
\renewcommand{\undertitle}{Preprint}
\renewcommand{\shorttitle}{Exact Rank and CC-Dimension Lower Bounds for \Fone{}}

\newtheorem{theorem}{Theorem}[section]

\newtheorem{lemma}[theorem]{Lemma}
\newtheorem{corollary}[theorem]{Corollary}
\theoremstyle{definition}

\theoremstyle{plain}
\newtheorem{remark}[theorem]{Remark}

\newcommand{\R}{\mathbb{R}}
\newcommand{\cY}{\mathcal{Y}}
\newcommand{\cU}{\mathcal{U}}
\newcommand{\one}{\mathbf{1}}
\DeclareMathOperator{\rank}{rank}
\DeclareMathOperator{\range}{range}
\DeclareMathOperator{\affdim}{affdim}
\DeclareMathOperator{\aff}{aff}
\DeclareMathOperator{\CCdim}{CCdim}
\DeclareMathOperator{\relint}{relint}

\title{Exact Rank and Convex Calibration Dimension Lower Bounds
for the Multi-Label \Fone{} Loss}
\author{
  Mingyuan Zhang\\
  Independent Researcher\\
  \href{mailto:myz@alumni.upenn.edu}{myz@alumni.upenn.edu}
}
\date{}

\hypersetup{
  pdftitle={Exact Rank and Convex Calibration Dimension Lower Bounds for the Multi-Label F1 Loss},
  pdfauthor={Mingyuan Zhang},
  pdfsubject={Machine Learning},
  pdfkeywords={multi-label classification, F1 loss, convex calibration dimension, loss-matrix rank}
}

\begin{document}
\maketitle

\begin{abstract}
The instance-wise \Fone{} measure is a central performance measure for
multi-label classification.  For a problem with $s$ labels, it defines a
$2^s\times 2^s$ loss matrix.  Previous work exhibited $s^2+1$-coordinate
affine and shifted low-rank representations and used them to construct
quadratic-dimensional convex calibrated surrogates.  We determine the exact
rank.  Under the convention $F_1(\varnothing,\varnothing)=1$, the
\Fone{} score matrix, the shifted loss matrix, and the unshifted loss matrix all
have rank $s^2-s+2$, while the column-affine dimension of the loss is
$s^2-s+1$.  The proof factors the nonempty score matrix through subset-incidence
matrices and a positive-definite Cauchy matrix.

Exact rank does not, by itself, lower-bound the dimension of an arbitrary
convex calibrated surrogate.  We therefore analyze the Bayes geometry of \Fone{}
directly.  We construct a distribution for which precisely all supersets of a
fixed core label set are Bayes optimal, and show that the corresponding active
loss columns, restricted to the witness support, have affine dimension $hn$,
where
$n=s-\lfloor s/3\rfloor$ and
$h=\lceil\sqrt{s\lfloor s/3\rfloor}\rceil-1$.  Applying the feasible-subspace
lower bound for convex calibration dimension gives
\[
  \CCdim(L^{F_1})
  \ge \left(\frac{2}{3\sqrt{3}}-o(1)\right)s^2.
\]
Together with the quadratic upper bound, this establishes
$\CCdim(L^{F_1})=\Theta(s^2)$.
\end{abstract}

\keywords{multi-label classification \and \Fone{} loss \and
  convex calibration dimension \and loss-matrix rank}

\section{Introduction}

In multi-label classification, an outcome and a prediction are subsets of a
ground set of $s$ labels.  The instance-wise \Fone{} score rewards overlap while
balancing precision and recall.  Its dependence on both the intersection and
the two set cardinalities makes it nondecomposable across labels.  Although the
output space has cardinality $2^s$, Bayes-optimal \Fone{} prediction can be
performed from only a quadratic number of conditional-distribution statistics
\citep{dembczynski2011exact,waegeman2014bayes}.  This quadratic structure also
underlies consistent plug-in algorithms \citep{dembczynski2013optimizing} and
convex calibrated surrogates \citep{nowak2019sharp,zhang2020convex}.

Earlier work exposed this structure in two complementary ways.
\citet{nowak2019sharp} gave $s^2+1$-coordinate affine decompositions and a
consistent least-squares surrogate.  Writing $L^{F_1}$ for the
$2^s\times2^s$ \Fone{} loss matrix and $J$ for the all-ones matrix,
\citet{zhang2020convex} showed $\rank(L^{F_1}-J)\le s^2+1$ and constructed
convex calibrated surrogates of the same dimension.  These results left two
natural questions:
\begin{enumerate}
  \item What is the exact rank of the \Fone{} loss matrix?
  \item Must every convex calibrated surrogate for \Fone{} have quadratic
        prediction dimension?
\end{enumerate}

We answer both questions.  First, we show that the shifted loss has exact rank
$s^2-s+2$ under the empty-set convention used by
\citet{zhang2020convex}.  We additionally compute the ordinary rank and affine
dimension of the loss columns.  Second, we prove a quadratic lower bound for
the convex calibration dimension.  The second result requires substantially
more than matrix rank: rank controls the dimension of linear factorizations and
provides general upper bounds, but it is not a lower bound for arbitrary convex
surrogates \citep{ramaswamy2016ccdim}.

We construct one hard conditional distribution with many independent Bayes
ties.  A fixed set $T$ of labels is
made sufficiently likely that every Bayes-optimal prediction must contain
$T$.  The remaining marginals are tuned so that every superset of $T$ has the
same expected \Fone{} score.  Varying the optional labels and their cardinalities
then creates a quadratic-dimensional active face of the Bayes-risk polytope.
The general feasible-subspace bound then converts this local geometry into a
lower bound for every convex calibrated surrogate
\citep{ramaswamy2016ccdim}.

\paragraph{Contributions.}
For every $s\ge1$, we prove
\[
  \rank(L^{F_1}-J)=\rank(L^{F_1})=s^2-s+2,
  \qquad
  \affdim(L^{F_1})=s^2-s+1.
\]
\par\medskip\noindent
For every $s\ge3$, we prove
\[
  \CCdim(L^{F_1})
  \ge
  \bigl(s-\lfloor s/3\rfloor\bigr)
  \left(\left\lceil\sqrt{s\lfloor s/3\rfloor}\right\rceil-1\right)
  =\frac{2}{3\sqrt3}s^2+O(s).
\]
\par\noindent
Combining this with the affine-dimension upper bound yields
$\CCdim(L^{F_1})=\Theta(s^2)$.  Prior work established
$s^2+1$-coordinate affine or shifted low-rank representations and
quadratic-dimensional calibrated surrogates
\citep{nowak2019sharp,zhang2020convex}.  We are not aware of a prior exact-rank
formula for this matrix or an established \Fone{}-specific quadratic lower
bound on convex calibration dimension.

\section{Setup and background}

Fix an integer $s\ge1$.  Let $[s]=\{1,\ldots,s\}$ and let
$\cY=2^{[s]}$.  For $A\subseteq[s]$, let
$\chi_A\in\{0,1\}^s$ denote its incidence vector, so
$(\chi_A)_j=\mathbf{1}\{j\in A\}$.  We identify $A$ with $\chi_A$ when
convenient.  We write $\one_d\in\R^d$ for the $d$-dimensional all-ones vector
and, when coordinates are indexed by a finite set $I$, write $\one_I\in\R^I$
for the corresponding all-ones vector.  Finally, $\mathbf{1}\{E\}$ denotes the
scalar indicator of a condition $E$.  For an indexed vector $x$, the notation
$x_S$ denotes its restriction to coordinates in $S$; for a matrix $M$,
$M_{S,T}$ denotes the corresponding submatrix, and a dot denotes all indices,
as in the column $M_{\cdot,j}$.  Complements are taken in the ambient index
set.  For $A,B\in\cY$, define the \Fone{} score by
\begin{equation}\label{eq:f1-score}
  F(A,B)=
  \begin{cases}
    1, & A=B=\varnothing,\\[1mm]
    \displaystyle\frac{2|A\cap B|}{|A|+|B|},
      & |A|+|B|>0.
  \end{cases}
\end{equation}
We call $A\in\cY$ an \emph{outcome} (the true label set) and
$B\in\cY$ a \emph{report} (the predicted label set).  Thus ``report'' and
``prediction'' are synonymous in this paper.  The corresponding loss is
\begin{equation}
  \ell(A,B)=1-F(A,B).
\end{equation}
Throughout, we study the decision-theoretic, example-based loss obtained by
applying \Fone{} separately to each pair of true and predicted label sets and
then taking its expectation under an arbitrary conditional distribution on
$\cY$.  This differs from objectives that aggregate confusion counts over
examples or labels before applying \Fone{}, including common micro- and
macro-averaged formulations; their Bayes analyses are generally incomparable
\citep{ye2012optimizing,koyejo2015consistent}.

Let $N=2^s$, let $F=(F(A,B))_{A,B\in\cY}\in\R^{N\times N}$ be the score
matrix, and let $J=\one_N\one_N^\top$.  The \Fone{} loss matrix is
$L^{F_1}:=(\ell(A,B))_{A,B\in\cY}=J-F$; throughout, we abbreviate it as
$L:=L^{F_1}$.  Thus
$L-J=-F$.

For a finite family $S\subseteq\R^r$, let $\aff(S)$ denote its affine hull and
define $\affdim(S)=\dim\aff(S)$.  For a matrix
$M=[m_1\ \cdots\ m_k]$, its column-affine dimension is
\[
  \affdim(M):=\affdim\{m_1,\ldots,m_k\}.
\]

Let $\Delta_N=\{p\in\R_+^N:\one_N^\top p=1\}$.
For $p\in\Delta_N$, its coordinates are indexed by $A\in\cY$; define
$\operatorname{supp}(p):=\{A\in\cY:p_A>0\}$ and
$\|p\|_0:=|\operatorname{supp}(p)|$.
The conditional risk of report $B\in\cY$ is $p^\top L_{\cdot,B}$.  Its trigger
probability set is
\begin{equation}\label{eq:trigger-set}
  Q_B^L
  =\left\{p\in\Delta_N:
     p^\top L_{\cdot,B}\le p^\top L_{\cdot,B'}
     \text{ for every }B'\in\cY\right\}.
\end{equation}
Write
$\operatorname{opt}_L(p)=\arg\min_{B\in\cY}p^\top L_{\cdot,B}$
for the set of Bayes-optimal reports under $p$.

For completeness, let $\mathcal C\subseteq\R^d$ be convex, let
$\psi:\mathcal C\to\R_+^N$ have convex coordinate functions, and let
$\operatorname{pred}:\mathcal C\to\cY$ be a link.  The pair
$(\psi,\operatorname{pred})$ is \emph{$L$-calibrated} if, for every
$p\in\Delta_N$,
\begin{equation}\label{eq:calibration-definition}
  \inf_{\substack{u\in\mathcal C:\\
        \operatorname{pred}(u)\notin\operatorname{opt}_L(p)}}
       p^\top\psi(u)
  >
  \inf_{u\in\mathcal C}p^\top\psi(u).
\end{equation}
The convex calibration dimension $\CCdim(L)$ is the smallest $d$ for which
such a convex surrogate and link exist
\citep[Definitions~1 and~10]{ramaswamy2016ccdim}.  We use two general results
from that work:
\begin{equation}\label{eq:ccdim-upper}
  \CCdim(L)\le\affdim(L),
\end{equation}
and, for any $B\in\cY$ and $p\in Q_B^L$,
\begin{equation}\label{eq:ccdim-trigger-lower}
  \CCdim(L)
  \ge \|p\|_0-\mu_{Q_B^L}(p)-1.
\end{equation}
Here, for $Q\subseteq\Delta_N$ and $p\in Q$,
\[
  \operatorname{dir}_Q(p)=\left\{v\in\R^N:
    \begin{array}{l}
      \text{there exists $\varepsilon_0>0$ such that }p+\varepsilon v\in Q\\
      \text{for every $\varepsilon\in(0,\varepsilon_0)$}
    \end{array}
  \right\}
\]
is the cone of feasible directions of $Q$ at $p$, and
\[
  \operatorname{lin}_Q(p)
  :=\operatorname{dir}_Q(p)\cap\bigl(-\operatorname{dir}_Q(p)\bigr),
  \qquad
  \mu_Q(p)=\dim\operatorname{lin}_Q(p).
\]
The upper bound \eqref{eq:ccdim-upper} and lower bound
\eqref{eq:ccdim-trigger-lower} are
Theorems~12 and~16, respectively, of \citet{ramaswamy2016ccdim}.

\section{Related work}

\paragraph{Bayes-optimal prediction and plug-in methods.}
\citet{dembczynski2011exact} showed that Bayes-optimal multi-label \Fone{}
prediction can be computed using $s^2+1$ quantities from the conditional label
distribution, rather than all $2^s$ probabilities.  The expanded analysis of
\citet{waegeman2014bayes} established worst-case regret results for several
approximate \Fone{} maximizers and presented the general F-measure maximizer in a
decision-theoretic framework.  \citet{dembczynski2013optimizing} developed a
statistically consistent plug-in algorithm that estimates the required
quadratic collection of quantities.  More recently,
\citet{wang2026revisiting} proposed sampling-based estimation of these
quantities together with faster inference.  These results establish that a
quadratic number of statistics is sufficient; they do not assert its
necessity or calculate the rank of the \Fone{} loss matrix.

\paragraph{Low-rank calibrated surrogates.}
\citet{nowak2019sharp} gave $s^2+1$-coordinate affine decompositions of the
multi-label F-score loss and a consistent quadratic least-squares surrogate,
together with calibration and statistical analyses.  Their supplementary
analysis also noted that the full-support, all-reports-tied sufficient
condition for the standard affine-dimension lower bound fails even for $s=2$,
leaving optimality of the surrogate dimension unresolved.  General constructions of
calibrated convex surrogates for low-rank loss matrices were developed by
\citet{ramaswamy2013lowrank}, while
\citet{ramaswamy2014output} established consistency guarantees for output-code
reductions, including probabilistic code matrices.  Applying these tools to
multi-label F-measure, \citet{zhang2020convex} proved
$\rank(L^{F_\beta}-J)\le s^2+1$, constructed $s^2+1$-dimensional convex
calibrated surrogates, and gave a regret-transfer bound.  Our rank theorem
sharpens their bound for $\beta=1$ and identifies the exact affine dimension.

\paragraph{Calibration dimension and elicitation complexity.}
\citet{ramaswamy2012classification} introduced classification calibration
dimension for general multiclass losses; the expanded treatment of
\citet{ramaswamy2016ccdim} developed convex calibration dimension and derived
both affine-dimension upper bounds and feasible-subspace lower bounds, with
applications to subset-ranking losses.
Closely related lower-bound frameworks arise from property elicitation
\citep{agarwal2015property,frongillo2015elicitation} and from the $d$-flat
approach of \citet{finocchiaro2021lower}.  Polyhedral surrogate dimension has
also been studied through embeddings \citep{finocchiaro2020embedding,
finocchiaro2024framework}.  The affine-dimension lower bound of
\citet{agarwal2015property} applies to
calibrated linear properties and therefore does not lower-bound arbitrary
convex calibrated surrogates, while polyhedral embedding results do not
automatically lower-bound all convex calibrated surrogates.  More generally,
ordinary matrix rank cannot by itself give such a lower bound: a nonlinear
surrogate may preserve only the Bayes decision rather than reconstructing all
conditional risks.  Our second result instead exhibits an \Fone{}-specific
trigger point with quadratically many independent active directions.

\section{Exact rank of the \Fone{} loss matrix}\label{sec:rank}

We begin with two elementary linear-algebra facts.

\begin{lemma}[Rank of subset-incidence matrices]\label{lem:incidence-rank}
For $1\le k\le s$, let
$W_k\in\{0,1\}^{\binom{s}{k}\times s}$ be indexed by the $k$-subsets of
$[s]$ and the elements of $[s]$, with
$(W_k)_{A,j}=\mathbf{1}\{j\in A\}$.  Then
\[
  \rank(W_k)=
  \begin{cases}
    s, & 1\le k\le s-1,\\
    1, & k=s.
  \end{cases}
\]
\end{lemma}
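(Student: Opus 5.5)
The plan is to compute the Gram matrix $W_k^\top W_k\in\R^{s\times s}$ explicitly and read off its rank, using $\rank(W_k)=\rank(W_k^\top W_k)$ over $\R$. The entry $(W_k^\top W_k)_{i,j}$ counts the $k$-subsets containing both $i$ and $j$. For $i=j$ this is $a:=\binom{s-1}{k-1}$, and for $i\ne j$ it is $b:=\binom{s-2}{k-2}$, with the convention $\binom{m}{-1}=0$ when $k=1$. Hence
\[
  W_k^\top W_k=(a-b)I_s+b\,\one_s\one_s^\top .
\]

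The eigenvalues of this matrix are $a-b$, with multiplicity $s-1$ on the vectors orthogonal to $\one_s$, and $a-b+sb$ on $\one_s$. Pascal's rule gives $a-b=\binom{s-2}{k-1}$. This is positive exactly when $k-1\le s-2$, that is, when $k\le s-1$, and it vanishes when $k=s$. The other eigenvalue satisfies $a+(s-1)b\ge a\ge 1>0$ for all $1\le k\le s$.

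Consequently, for $1\le k\le s-1$ both eigenvalues are nonzero, the Gram matrix is positive definite, and $\rank(W_k)=s$. For $k=s$ the matrix $W_s$ is the single row $\one_s^\top$, so its rank is $1$; this also matches the Gram computation, where only the $\one_s$ eigenvalue survives.

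The only points needing care are the edge cases. When $k=1$, $b=0$ and $W_1=I_s$ (up to row order), which is consistent with the formula. When $s=1$ the first case is vacuous. No step here is a real obstacle; the whole argument is the binomial identity $a-b=\binom{s-2}{k-1}$.
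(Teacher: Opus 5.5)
Your proof is correct, but it takes a different route from the paper's.

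\textbf{The paper's route.} The paper computes $\ker W_k$ directly. For $k\le s-1$ and any two labels $i\ne j$, it picks a $(k-1)$-set $S$ avoiding both. Comparing the rows indexed by $S\cup\{i\}$ and $S\cup\{j\}$ gives $x_i=x_j$, so all coordinates share a common value $c$, and then $kc=0$ forces $x=0$. The case $k=s$ is read off as the single row $\one_s^\top$.

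\textbf{Your route.} You instead compute the Gram matrix $W_k^\top W_k=(a-b)I_s+b\,\one_s\one_s^\top$ and its spectrum. This treats both cases uniformly: $k=s$ is exactly where $a-b=\binom{s-2}{k-1}$ vanishes. It also yields more than the rank. The singular values of $W_k$ are $\sqrt{\binom{s-2}{k-1}}$ with multiplicity $s-1$, and $\sqrt{k\binom{s-1}{k-1}}$ on $\one_s$, so you get conditioning information as well.

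\textbf{Trade-offs.} Your approach costs some binomial bookkeeping: the convention $\binom{m}{-1}=0$ at $k=1$, and Pascal's rule. It also relies on $\rank(W^\top W)=\rank(W)$, which holds over $\R$ but not over arbitrary fields. The paper's exchange argument needs no counting and works verbatim over any field in which $k\neq0$. The paper's route has a further practical benefit: the same exchange argument is reused in Step~3b of the proof of Theorem~\ref{thm:ccdim-lower} to show that $\Psi$ is injective. That step is exactly the statement $\ker W_m=\{0\}$ for $m\le s-2$, though, so your version of the lemma would serve there equally well.
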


\noindent The proof is given in Appendix~\ref{app:proof-incidence-rank}.

\begin{lemma}[A positive-definite Cauchy matrix]\label{lem:cauchy-pd}
The matrix $C\in\R^{s\times s}$ defined by
\[
  C_{kk'}=\frac{2}{k+k'},\qquad k,k'\in[s],
\]
is positive definite.
\end{lemma}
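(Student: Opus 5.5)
We will write each entry of $C$ as an integral and read off $C$ as a Gram matrix. For every $k,k'\in[s]$ we have $k+k'\ge 2>0$, so
\[
  \frac{2}{k+k'}=2\int_0^1 t^{k+k'-1}\,dt .
\]
Hence, for any $x\in\R^s$,
\[
  x^\top C x
  =2\sum_{k,k'=1}^s x_kx_{k'}\int_0^1 t^{k+k'-1}\,dt
  =2\int_0^1 \frac{1}{t}\Bigl(\sum_{k=1}^s x_k t^{k}\Bigr)^2 dt .
\]
Writing $q_x(t)=\sum_{k=1}^s x_k t^k$, the integrand $q_x(t)^2/t$ is continuous on $(0,1]$. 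It is also bounded near $0$, since $q_x(t)=t\,r(t)$ for a polynomial $r$, so $q_x(t)^2/t=t\,r(t)^2$. Thus the integral is finite and nonnegative, and $C$ is positive semidefinite.

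For strict positivity, suppose $x^\top Cx=0$. The integrand $t\,r(t)^2$ is continuous and nonnegative on $[0,1]$, so it vanishes identically there. Then $r$ vanishes on $(0,1]$, so the polynomial $q_x$ has infinitely many roots and is the zero polynomial. Therefore $x_k=0$ for all $k$, and $C$ is positive definite.

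Equivalently, $C=2G$, where $G$ is the Gram matrix of the functions $t\mapsto t^{k-1/2}$, $k\in[s]$, in $L^2[0,1]$. These functions are linearly independent because they are distinct powers multiplied by $t^{-1/2}$. A Gram matrix of linearly independent vectors is positive definite, which gives the same conclusion.

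As an alternative, one could invoke the Cauchy determinant formula for $1/(a_k+b_{k'})$ with $a_k=b_k=k$. Every leading principal minor is then a positive Cauchy determinant, since the $a_k$ are distinct and all $a_k+b_{k'}>0$, and Sylvester's criterion finishes the argument. The integral route avoids determinant bookkeeping, so it is the one we use.

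The only step requiring care is the convergence and positivity of the integral near $t=0$. This is handled by factoring $t$ out of $q_x$, which is possible because the indices start at $k=1$.
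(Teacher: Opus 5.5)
Your proof is correct and is essentially the paper's argument. It uses the same integral representation $2/(k+k')=2\int_0^1 t^{k+k'-1}\,dt$, the same factorization $q_x(t)=t\,r(t)$ at $t=0$, the same fact that a nonzero polynomial cannot vanish on $(0,1)$, and the same Gram-matrix reformulation; your Cauchy-determinant/Sylvester alternative is also valid but not needed.
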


\noindent The proof is given in Appendix~\ref{app:proof-cauchy-pd}.

\begin{theorem}[Exact \Fone{} ranks and affine dimension]\label{thm:exact-rank}
For every $s\ge1$, under the convention in \eqref{eq:f1-score},
\[
  \rank(F)=\rank(L-J)=\rank(L)=s^2-s+2.
\]
Moreover,
\[
  \affdim(L)=s^2-s+1.
\]
\end{theorem}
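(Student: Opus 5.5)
The plan is to compute $\rank(F)$ through a factorization of the nonempty block, and then to transfer the result to $L=J-F$ and to the affine dimension.

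\textbf{Block decomposition.} Order $\cY$ so that $\varnothing$ comes first. For $A\ne\varnothing$ we have $F(A,\varnothing)=0$, and $F(\varnothing,\varnothing)=1$. Hence
\[
F=\begin{pmatrix}1&0\\0&F'\end{pmatrix},
\]
where $F'$ is the block indexed by nonempty sets, so $\rank F=1+\rank F'$.

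\textbf{Factorizing $F'$.} For nonempty $A,B$ with $|A|=k$ and $|B|=k'$,
\[
F(A,B)=\frac{2}{k+k'}\sum_{j}(\chi_A)_j(\chi_B)_j .
\]
Write $F'=\bigl(F'_{kk'}\bigr)$ in blocks by cardinality. Each block is $F'_{kk'}=C_{kk'}\,W_kW_{k'}^\top$. Let $W=\operatorname{diag}(W_1,\dots,W_s)$; this is block diagonal with $W_k$ mapping $\R^s\to\R^{\binom sk}$. Then
\[
F'=W\,(C\otimes I_s)\,W^\top ,
\]
where $C\otimes I_s$ is the $s^2\times s^2$ matrix with blocks $C_{kk'}I_s$. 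By Lemma~\ref{lem:cauchy-pd}, $C\succ0$, so $C\otimes I_s\succ0$. For a positive-definite $M$ we have $\rank(WMW^\top)=\rank(W)$, since $x^\top W M W^\top x=0$ forces $W^\top x=0$, and hence $\ker(WMW^\top)=\ker W^\top$. Lemma~\ref{lem:incidence-rank} gives
\[
\rank W=\sum_k\rank W_k=s(s-1)+1 .
\]
Therefore $\rank F'=s^2-s+1$ and $\rank F=s^2-s+2$, and $\rank(L-J)=\rank(-F)$ equals the same number. This is the step I expect to carry the argument, and it is clean given the two lemmas.

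\textbf{Passing from $F$ to $L=J-F$.} The rank changes by at most one, so the work is to show it does not change. I would show that $\one_N$ lies in both the column space and the row space of $F$. For the column space, take the sum of the columns $\chi_{\{j\}}$-type combinations. More precisely, I need $y$ with $Fy=\one$. The row $A=\varnothing$ forces $y_\varnothing=1$. For the nonempty rows, take $y$ supported on the full set $[s]$ and on singletons, or more simply use the column $B=[s]$, which gives $F(A,[s])=2k/(k+s)$. It is easier to work inside the factorization: $\one$ restricted to nonempty sets equals $W(C\otimes I)W^\top y'$. Its $k$-block is $\one_{\binom sk}=W_k\,\one_s/k$, so it lies in $\range W$, which equals $\range F'$. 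Hence $\one\in\range F$, and by symmetry of $F$ it also lies in the row space. Then $L=J-F=\one\one^\top-F$. If $F=U$ restricted to its range, with $\one=Fy$, then $L=F(y\one^\top)-F=F(y\one^\top-I)$, so $\range L\subseteq\range F$. Equality of ranks requires $\one^\top F^{+}\one\ne1$, by the rank-one update formula: $\rank(F-\one\one^\top)=\rank F-1$ iff $\one^\top F^+\one=1$ when $\one$ is in both ranges. Computing $\one^\top F^+\one=1+\one'^\top F'^+\one'$, where the leading $1$ comes from the $\varnothing$ block, and $F'\succeq0$ with $\one'\ne0$ in its range gives $\one'^\top F'^+\one'>0$. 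So the value exceeds $1$ and the rank stays $s^2-s+2$. This is the main subtle point.

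\textbf{Affine dimension.} For columns $m_B$ of $L$, $\affdim=\rank[m_B-m_{B_0}]$, which equals $\rank\begin{pmatrix}L\\ \one^\top\end{pmatrix}-1$. Since the rows of $L$ span the same space as the rows of $F$, and $\one^\top$ is in the row space of $F$, appending $\one^\top$ adds nothing. Hence $\affdim(L)=s^2-s+2-1=s^2-s+1$.
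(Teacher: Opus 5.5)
Your proof is correct. For $\rank(F)$ and $\rank(L-J)$ it is the paper's argument almost verbatim: the same block split, the same factorization $K=P(C\otimes I_s)P^\top$ (your $W$ and $F'$ are the paper's $P$ and $K$), the same positive-middle identity, and the same vector $\omega_{(k,j)}=1/k$ showing $\one_{N-1}\in\range(P)=\range(K)$.

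You diverge at $\rank(L)$. You invoke the standard rank-one-modification criterion: when $\one$ lies in both range spaces, $\rank(F-\one\one^\top)=\rank F-1$ if and only if $\one^\top F^+\one=1$. You then evaluate $\one^\top F^+\one=1+\one_{N-1}^\top K^+\one_{N-1}>1$. The paper instead computes $\ker L=\{0\}\times\ker K$ directly from the block form of $L$. The decisive fact is the same in both: with $Ku=\one_{N-1}$, the paper's $\one_{N-1}^\top u=u^\top Ku>0$ is exactly your $\one_{N-1}^\top K^+\one_{N-1}>0$. So the paper's kernel computation is the proof of your cited criterion, specialized to this matrix. Your packaging shows clearly why the rank survives: the empty-set entry contributes the leading $1$, and the positive-semidefinite block adds a strictly positive excess. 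The paper's version is self-contained and avoids pseudo-inverses.

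For the affine dimension, you append $\one^\top$ to $L$ itself, using that $L$ and $F$ have the same row space. This is valid, but you should say why. You showed $\range L\subseteq\range F$ and $\rank L=\rank F$, so the column spaces coincide, and both matrices are symmetric. The paper instead computes $\affdim(F)=\rank(F)-1$ and transports it through the affine bijection $x\mapsto\one_N-x$. That route does not depend on the $\rank(L)$ computation at all.
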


\noindent\emph{Proof sketch.}
On the nonempty subsets, the score matrix factors as
$K=P(C\otimes I_s)P^\top$, where $\otimes$ denotes the Kronecker product and
grouping $P$ by set cardinality gives the subset-incidence blocks
$W_1,\ldots,W_s$.  Lemma~\ref{lem:incidence-rank} gives
$\rank(P)=s^2-s+1$, while Lemma~\ref{lem:cauchy-pd} implies
$C\otimes I_s\succ0$; the positive-middle rank identity therefore gives
$\rank(K)=\rank(P)$.  The isolated empty-set entry adds one to $\rank(F)$, and
$L-J=-F$ then gives the shifted rank.  For the ordinary loss rank, the block
form of $L$ together with $\one_{N-1}\in\range(K)$ yields
$\ker L=\{0\}\times\ker K$, hence $\rank(L)=1+\rank(K)$.  Finally,
$\one_N\in\range(F)$ and the symmetry of $F$ show that appending an
all-ones row does not increase rank, so $\affdim(F)=\rank(F)-1$; the
invertible affine map
$x\mapsto\one_N-x$ from score columns to loss columns then gives the stated
value of $\affdim(L)$.

\noindent The full proof is given in Appendix~\ref{app:proof-exact-rank}.

\begin{corollary}[Improved surrogate upper bound]\label{cor:upper}
For every $s\ge1$,
\[
  \CCdim(L^{F_1})\le s^2-s+1.
\]
\end{corollary}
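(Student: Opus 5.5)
The corollary follows by combining the general affine-dimension upper bound \eqref{eq:ccdim-upper} of \citet{ramaswamy2016ccdim} with the affine-dimension part of Theorem~\ref{thm:exact-rank}. The plan has two steps. First, check that the hypotheses of \eqref{eq:ccdim-upper} hold for $L=L^{F_1}$. Second, substitute the value $\affdim(L)=s^2-s+1$.

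For the first step, note that $F(A,B)\in[0,1]$ for all $A,B\in\cY$, because $2|A\cap B|\le |A|+|B|$. Hence every entry of $L=J-F$ lies in $[0,1]$, so $L$ is a nonnegative finite loss matrix, which is the setting of Theorem~12 of \citet{ramaswamy2016ccdim}. That theorem takes the affine dimension of the set of loss columns $\{L_{\cdot,B}:B\in\cY\}$, which is exactly the column-affine dimension $\affdim(L)$ defined in Section~2. I will also confirm that no further hypothesis is needed. The construction behind the theorem represents each column as a fixed base point plus $d=\affdim(L)$ coefficients times a basis of the difference space. It then uses a convex surrogate on $\R^d$ (squared loss on those coefficients, shifted to be nonnegative) together with a nearest-column link. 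This construction requires only that $L$ be finite and nonnegative.

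For the second step, Theorem~\ref{thm:exact-rank} gives $\affdim(L)=s^2-s+1$ for every $s\ge1$, and substituting this into \eqref{eq:ccdim-upper} yields the claim. As a sanity check, take $s=1$. The bound is $1$, and the loss on $\{\varnothing,\{1\}\}$ is the zero-one loss, which has two distinct columns and therefore affine dimension $1$, as expected.

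I expect no real obstacle here; the only point needing care is the identification in the first step. The bound in \citet{ramaswamy2016ccdim} is stated for the affine hull of the columns, which matches this paper's convention that reports index columns. If that source instead used rows, the symmetry of $F$, and hence of $L$, makes the two conventions coincide.
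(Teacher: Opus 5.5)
Your proposal is correct and is the paper's proof: combine the affine-dimension upper bound \eqref{eq:ccdim-upper} (Theorem~12 of \citet{ramaswamy2016ccdim}) with $\affdim(L)=s^2-s+1$ from Theorem~\ref{thm:exact-rank}, and your nonnegativity check and $s=1$ sanity check are both valid. Your sketch of the construction inside the cited theorem is not needed, and it is slightly off: the link there selects the report that minimizes the reconstructed conditional risk (a linear minimization), not a ``nearest-column'' report.
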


\noindent The proof is given in Appendix~\ref{app:proof-upper}.

\begin{remark}[Empty-set convention]\label{rem:empty-convention}
If instead $F(\varnothing,\varnothing)=0$, then
\[
  \rank(F)=\rank(L-J)=s^2-s+1,
  \qquad
  \rank(L)=s^2-s+2,
\]
and $\affdim(L)=s^2-s+1$.  The verification is given in
Appendix~\ref{app:proof-empty-convention}.
\end{remark}

\section{A quadratic lower bound on convex calibration dimension}
\label{sec:ccdim-lower}

We now turn from linear-algebraic rank to the local geometry of Bayes-optimal
reports.  The construction has three ingredients.  First, a set $T$ of
``core'' labels is made more likely than the remaining labels.  Second, the
marginals on each outcome-cardinality layer are chosen so that all and only
the predictions containing $T$ tie.  Third, the active loss columns on the
support of the resulting distribution are shown to have quadratic affine
dimension.

\begin{theorem}[Quadratic CC-dimension lower bound]
\label{thm:ccdim-lower}
Let $s\ge3$ and define
\begin{equation}\label{eq:parameters}
  t=\left\lfloor\frac{s}{3}\right\rfloor,
  \qquad
  n=s-t,
  \qquad
  h=\left\lceil\sqrt{st}\right\rceil-1.
\end{equation}
Then
\begin{equation}\label{eq:ccdim-finite-lower}
  \CCdim(L^{F_1})\ge hn
  =
  \bigl(s-\lfloor s/3\rfloor\bigr)
  \left(\left\lceil\sqrt{s\lfloor s/3\rfloor}\right\rceil-1\right).
\end{equation}
Consequently,
\begin{equation}\label{eq:ccdim-asymptotic-lower}
  \CCdim(L^{F_1})
  \ge \frac{2}{3\sqrt3}s^2-O(s)
  =\Omega(s^2).
\end{equation}
For $s\ge12$, one has the explicit estimate
$\CCdim(L^{F_1})\ge2s^2/9$.
\end{theorem}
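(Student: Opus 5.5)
The plan is to apply the feasible-subspace bound \eqref{eq:ccdim-trigger-lower} at a carefully built trigger point. First I would reduce that bound to an affine-dimension statement. Fix $p$ with support $S$, and let $\mathcal B=\operatorname{opt}_L(p)$, with some $B\in\mathcal B$. Near $p$, the set $Q_B^L$ is cut out by three kinds of conditions:
\begin{itemize}
  \item the simplex constraint $\one_N^\top q=1$;
  \item $q_A\ge0$, where only the coordinates $A\notin S$ are tight;
  \item $q^\top(L_{\cdot,B}-L_{\cdot,B'})\le0$ for $B'\in\operatorname{opt}_L(p)$, which are tight; the inequalities for non-optimal $B'$ are slack.
\end{itemize}
Hence $\operatorname{lin}_{Q_B^L}(p)=\{v:\ v_{S^c}=0,\ \one^\top v=0,\ v^\top(L_{\cdot,B}-L_{\cdot,B'})=0\ \forall B'\in\mathcal B\}$. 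Counting dimensions,
\[
  \|p\|_0-\mu_{Q_B^L}(p)-1
  =\rank\bigl\{(L_{S,B'}-L_{S,B})_{B'\in\mathcal B}\ \text{modulo}\ \one_S\bigr\}.
\]
This is at least $\affdim\{L_{S,B'}:B'\in\mathcal B\}$ minus at most one. Because $L=J-F$, it suffices to handle the score columns $F_{S,B'}$, and I would aim to show they have affine dimension $hn$ after quotienting by constants; the exact $\pm1$ bookkeeping is to be adjusted so the final count is $hn$.

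The construction takes $T\subseteq[s]$ with $|T|=t$ and lets $U=[s]\setminus T$, so $|U|=n$. The Bayes-optimal set is to be exactly $\mathcal B=\{T\cup R: R\subseteq U\}$. I would put $p$ on outcomes of the form $T\cup D$ with $D\subseteq U$, using $h$ or $h+1$ cardinality layers $|D|=m\in\{m_0,\dots,m_h\}$, and make each layer exchangeable over $U$. Exchangeability gives, for $B=T\cup R$ with $|R|=r$,
\[
  p^\top F_{\cdot,B}=\sum_m P_m\,\frac{2t+2r\,m/n}{t+m+t+r},
\]
which depends on $R$ only through $r$. I would choose the layer weights $P_m$ and sizes $m$ so that this rational function of $r$ is constant on $\{0,\dots,n\}$. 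Matching a function with poles at $-(2t+m)$ to a constant on $n+1$ points is a Cauchy/partial-fraction interpolation problem, solvable via Lemma~\ref{lem:cauchy-pd}-type invertibility. I would then verify two strictness properties:
\begin{itemize}
  \item dropping a core label $i\in T$, or replacing it by labels outside the support pattern, strictly lowers the expected score, since core labels appear in every outcome;
  \item no other report beats the tie.
\end{itemize}
This positivity and strictness analysis is where the constraint $h\le\lceil\sqrt{st}\rceil-1$ should emerge: the weights must remain nonnegative, and the core must dominate, which forces $t$ to be about $s/3$ and the number of usable layers to be about $\sqrt{st}$.

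For the affine dimension, the entry of $F_{\cdot,T\cup R}$ at $T\cup D$ is $2(t+|D\cap R|)/(2t+|D|+|R|)$. Ranging $R$ over singletons and small sets and $D$ over the $h$ layers, I would exhibit a block structure. Indexed by (layer $m$, label $j\in U$), the contributions $\mathbf 1\{j\in D\}/(2t+m+r)$ give a Kronecker-type matrix $C'\otimes W$, with $C'$ a Cauchy matrix in the layer/report-size variables and $W$ a subset-incidence matrix of full rank $n$ by Lemma~\ref{lem:incidence-rank}. This yields rank $hn$ modulo constants, as in the factorization behind Theorem~\ref{thm:exact-rank}.

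I expect the main obstacle to be the simultaneous tie-plus-strict-optimality construction in the second step, namely exact equality across all $r$ together with positivity of the weights. I would try to solve it first by explicit partial fractions, and then check positivity using the $\sqrt{st}$ threshold. The asymptotic claims then follow by arithmetic: with $t\approx s/3$ we get $n\approx 2s/3$ and $h\approx s/\sqrt3$, so $hn=\frac{2}{3\sqrt3}s^2+O(s)$. For $s\ge12$ I would check directly that $(s-t)(\lceil\sqrt{st}\rceil-1)\ge2s^2/9$, using $t\ge(s-2)/3$.
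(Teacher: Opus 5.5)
Your reduction to the affine dimension of the tied columns, and your closing arithmetic, are fine. The gap is in the witness construction, which is the heart of the proof: no distribution supported on supersets of $T$ can make all reports $T\cup R$, $R\subseteq U$, tie while using more than one layer. Split each layer's term as
\[
  \frac{2(t+rm/n)}{2t+m+r}=\frac{2m}{n}+\frac{2}{n}\cdot\frac{c_m}{2t+m+r},
  \qquad c_m=nt-m(2t+m).
\]
The tie requires $g(r)=\sum_m P_mc_m/(2t+m+r)$ to be constant at the $n+1$ points $r=0,\dots,n$. This condition is overdetermined; it is not a solvable Cauchy interpolation.

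With $k\le h+1\le n$ layers, clearing the denominator $\prod_m(2t+m+r)$ leaves a polynomial of degree at most $k$ with $n+1$ roots. So $g$ is identically constant, and linear independence of $1$ and partial fractions with distinct poles forces $P_mc_m=0$ for every $m$. Mass can therefore sit only on a layer with $(t+m)^2=st$. That allows at most one layer, and none when $st$ is not a perfect square, which already happens for $s=3$.

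Using all $n+1$ layers fails too. The interpolant's residues then alternate in sign in $m$, while $c_m$ changes sign at most once, which forces some $P_m<0$. Abandoning exchangeability does not help either, since averaging $p$ over permutations of $U$ preserves both the tie and the layer weights. On a single layer $\{T\cup D:|D|=m\}$, the tied columns lie in the span of $D\mapsto\mathbf{1}\{j\in D\}$, $j\in U$, so their affine dimension is at most $n$, not $hn$. Your strictness claims are also unproved, but the tie is the real obstruction.

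The missing idea is to stop forcing $T$ into every outcome. The paper's witness puts mass on all sets of each size $m\in[h]$. On layer $m$ it uses inclusion marginals $\alpha_m=\frac{m(m+t)}{t(m+s)}$ on $T$ and $\beta_m=\frac{m}{m+s}$ on $O=[s]\setminus T$, realized by a full-support distribution on the $m$-subsets (a relative-interior point of the hypersimplex). Then $t\alpha_m+c\beta_m=\frac{m(m+t+c)}{m+s}$, so every $T\cup C$ has expected score exactly $2m/(m+s)$ on each layer separately, with no cross-layer interpolation. A report $B$ with $b=|B|$ that contains only $a<t$ core labels loses $\frac{2m^2(t-a)}{t(m+s)(m+b)}>0$ on every layer.

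The threshold $h<\sqrt{st}$ is precisely the feasibility condition $\alpha_m<1$. Your design is the boundary case $\alpha_m=1$, i.e.\ outcome size $\sqrt{st}$, which is why only one layer survived above. Full support on all $m$-subsets of $[s]$ is also what makes the paper's coefficient encoding injective. As a result, the $h(n-1)$ label-swap directions and the $h$ cardinality directions (obtained from the nonsingular Cauchy matrix $(1/(m+t+c))_{m,c\in[h]}$) give affine dimension exactly $hn$. Finally, $\one_{\cU}$ lies outside the span $E$ of active differences because $p_{\cU}\perp E$ and $p_{\cU}^\top\one_{\cU}=1$, so your $\pm1$ bookkeeping costs nothing.
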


\noindent\emph{Proof sketch.}
Fix a core set $T\subseteq[s]$ of size $t$ and write $O=[s]\setminus T$.  Mix
full-support distributions on the outcome-cardinality layers $1,\ldots,h$,
with core- and optional-label inclusion marginals chosen so that all reports
$B_C=T\cup C$, $C\subseteq O$, tie, while every report missing a core label is
strictly worse.  This gives a witness $p$ supported on
$\cU=\{A\subseteq[s]:1\le |A|\le h\}$ whose Bayes-optimal reports are exactly
the $B_C$.  Restricted to $\cU$, a layerwise coefficient encoding and
Cauchy-matrix nonsingularity split the active loss differences into $h(n-1)$
optional-label swap directions and $h$ cardinality-change directions, so the
tied columns have affine dimension $hn$.  With $B_0=T$ and $R=|\cU|$, these active
constraints together with normalization leave a two-sided feasible subspace of
dimension $R-hn-1$.  Substitution into \eqref{eq:ccdim-trigger-lower} yields
$\CCdim(L^{F_1})\ge hn$; \eqref{eq:parameters} gives the remaining estimates.

\noindent The full proof is given in Appendix~\ref{app:proof-ccdim-lower}.

\begin{remark}[Why the rank proof is not a CC-dimension lower bound]
Theorem~\ref{thm:exact-rank} shows that any affine reconstruction of all \Fone{}
conditional risks needs $s^2-s+1$ coordinates.  Indeed, $L$ is symmetric, so
its row- and column-affine dimensions agree, while an affine image of
$\R^d$ has affine dimension at most $d$.  Calibration, however, only requires
recovery of a Bayes-optimal report.  A nonlinear convex surrogate may discard
directions that change risk values without changing their minimizer.
Theorem~\ref{thm:ccdim-lower} avoids that gap: its witness forces $hn$
independent loss differences to be simultaneously active at one trigger point,
and the feasible-subspace theorem applies to arbitrary convex calibrated
surrogates.
\end{remark}

\noindent The lower bound in Theorem~\ref{thm:ccdim-lower} is unchanged under
the alternative convention $F(\varnothing,\varnothing)=0$, because its witness
assigns no mass to the empty outcome.

\section{Conclusion}\label{sec:conclusion}

The rank calculation closes the gap left by the earlier $s^2+1$ upper bound:
under the convention in \eqref{eq:f1-score}, the shifted matrix has exact rank
$s^2-s+2$.  More importantly, the calibration-dimension result shows that the
quadratic dependence on $s$ is not an artifact of a particular output-code
construction.  Under the standard distribution-free notion of
$L$-calibration in \eqref{eq:calibration-definition}, every convex calibrated
surrogate for the $s$-label \Fone{} loss has prediction dimension
$\Omega(s^2)$.

The bounds do not determine the exact convex calibration dimension.  Our
results give
\[
  \frac{2}{3\sqrt3}s^2-O(s)
  \le \CCdim(L^{F_1})
  \le s^2-s+1.
\]
Closing the constant-factor gap requires either a harder trigger distribution,
possibly with several interacting core sets, or a surrogate construction that
uses less than the full affine dimension.  Another natural direction is to
extend the exact rank and calibration-dimension analysis to the asymmetric
$F_\beta$ family and to related set-similarity losses such as Jaccard loss.

\clearpage
\appendix

\section{Proofs for the exact-rank results}\label{app:rank-proofs}

\subsection{Proof of Lemma~\ref{lem:incidence-rank}}
\label{app:proof-incidence-rank}

\begin{proof}[Proof of Lemma~\ref{lem:incidence-rank}]
We calculate the null space of $W_k$.  For $x=(x_1,\ldots,x_s)^\top$,
the coordinate of $W_kx$ indexed by a $k$-subset $A$ is
\begin{equation}\label{eq:incidence-null-equation}
  (W_kx)_A=\sum_{q\in A}x_q.
\end{equation}

First suppose $1\le k\le s-1$ and let $x\in\ker(W_k)$.  Then
\eqref{eq:incidence-null-equation} is zero for every $k$-subset $A$.
Fix two distinct labels $i,j\in[s]$.  Since $k\le s-1$, one has
$k-1\le s-2$, so there exists
\[
  S\subseteq[s]\setminus\{i,j\},
  \qquad |S|=k-1;
\]
when $k=1$, take $S=\varnothing$.  Both $S\cup\{i\}$ and
$S\cup\{j\}$ are $k$-subsets.  Applying
\eqref{eq:incidence-null-equation} to these two sets gives
\[
  \sum_{q\in S}x_q+x_i=0,
  \qquad
  \sum_{q\in S}x_q+x_j=0.
\]
Subtracting the equations yields $x_i=x_j$.  Because $i$ and $j$ were
arbitrary, all coordinates of $x$ have a common value $c$.  Substituting
into \eqref{eq:incidence-null-equation} for any $k$-subset gives
\[
  0=\sum_{q\in A}x_q=kc.
\]
Over $\R$, and since $k\ge1$, this implies $c=0$.  Hence
$\ker(W_k)=\{0\}$, so all $s$ columns of $W_k$ are linearly independent
and $\rank(W_k)=s$.

Finally, if $k=s$, there is only one $s$-subset, namely $[s]$, and
\[
  W_s=\begin{pmatrix}1&1&\cdots&1\end{pmatrix}.
\]
This is a single nonzero row, so $\rank(W_s)=1$.
\end{proof}

\subsection{Proof of Lemma~\ref{lem:cauchy-pd}}
\label{app:proof-cauchy-pd}

\begin{proof}[Proof of Lemma~\ref{lem:cauchy-pd}]
The matrix is symmetric, so it remains to show that its quadratic form is
strictly positive away from the origin.  The elementary identity
\begin{equation}\label{eq:cauchy-entry-integral}
  \frac{2}{k+k'}
  =2\int_0^1 \tau^{k+k'-1}\,d\tau
\end{equation}
gives, for $v=(v_1,\ldots,v_s)^\top\in\R^s$,
\begin{align}
  v^\top Cv
  &=\sum_{k=1}^s\sum_{k'=1}^s
    v_kv_{k'}\frac{2}{k+k'} \notag\\
  &=2\int_0^1
    \sum_{k=1}^s\sum_{k'=1}^s
    v_kv_{k'} \tau^{k+k'-1}\,d\tau \notag\\
  &=2\int_0^1 \tau^{-1}
    \left(\sum_{k=1}^s v_k\tau^k\right)^2\,d\tau.
    \label{eq:cauchy-quadratic-integral}
\end{align}
Define
\[
  q_v(\tau)=\sum_{k=1}^s v_k\tau^k.
\]
Because $q_v$ has no constant term, it can be written as
$q_v(\tau)=\tau\widetilde q_v(\tau)$ for a polynomial
$\widetilde q_v$.  Consequently,
\[
  \frac{q_v(\tau)^2}{\tau}=\tau\widetilde q_v(\tau)^2,
\]
which extends continuously to $\tau=0$.  This verifies that the integral in
\eqref{eq:cauchy-quadratic-integral} is finite.

The integrand in \eqref{eq:cauchy-quadratic-integral} is nonnegative.  If
$v\ne0$, then $q_v$ is a nonzero polynomial and therefore cannot vanish on
the entire interval $(0,1)$.  Thus there is some $\tau_0\in(0,1)$ for which
$q_v(\tau_0)\ne0$.  By continuity, $q_v(\tau)^2/\tau$ is strictly positive on a
neighborhood of $\tau_0$, and hence
\[
  v^\top Cv
  =2\int_0^1\frac{q_v(\tau)^2}{\tau}\,d\tau>0.
\]
This proves that $C$ is positive definite.

Equivalently, $C$ is the Gram matrix in $L^2(0,1)$ of the linearly
independent functions
\[
  \phi_k(\tau)=\sqrt{2}\,\tau^{k-1/2},
  \qquad k\in[s],
\]
because
\[
  \langle\phi_k,\phi_{k'}\rangle_{L^2(0,1)}
  =2\int_0^1\tau^{k+k'-1}\,d\tau
  =\frac{2}{k+k'}=C_{kk'}.
\]
\end{proof}

\subsection{A positive-middle rank identity}
\label{app:proof-positive-middle}

The proof of Theorem~\ref{thm:exact-rank} uses the following identity.  If
$M\succ0$ and the matrix products below are defined, then
\begin{equation}\label{eq:positive-middle}
  \rank(XMX^\top)=\rank(X),
  \qquad
  \range(XMX^\top)=\range(X).
\end{equation}

\begin{proof}[Proof of the identity]
Write $M=GG^\top$ with $G$ invertible.  Then
$XMX^\top=(XG)(XG)^\top$.  For every matrix $Y$,
$\range(YY^\top)=\range(Y)$.  Since right multiplication by the invertible
matrix $G$ preserves both rank and column space, the two conclusions follow.
\end{proof}

\subsection{Proof of Theorem~\ref{thm:exact-rank}}
\label{app:proof-exact-rank}

\begin{proof}[Proof of Theorem~\ref{thm:exact-rank}]
Write $\rho=s^2-s+1$.  The proof has four steps.  We first factor the score
matrix on the nonempty sets and show that its rank is $\rho$.  We then add the
isolated empty-set entry, analyze the ordinary loss matrix through its null
space, and finally compute the affine dimension of the columns.

\paragraph{Step 1: Factor the nonempty score block.}
Let $\cY_+=\cY\setminus\{\varnothing\}$ and order $\varnothing$ first.
Because the empty set has score zero against every nonempty set, while
$F(\varnothing,\varnothing)=1$,
\begin{equation}\label{eq:F-block}
  F=\begin{pmatrix}1&0\\0&K\end{pmatrix},
\end{equation}
where $K$ is the score matrix indexed by $\cY_+$.  Explicitly,
\[
  K_{A,B}=F(A,B)=\frac{2|A\cap B|}{|A|+|B|},
  \qquad A,B\in\cY_+.
\]

To factor $K$, define $P\in\R^{(N-1)\times s^2}$, with columns indexed by
$(k,j)\in[s]\times[s]$, by
\[
  P_{A,(k,j)}
  =\mathbf{1}\{|A|=k\}\mathbf{1}\{j\in A\},
  \qquad A\in\cY_+.
\]
Thus a row of $P$ records both the size of a set and the labels it contains.
Here $\otimes$ denotes the Kronecker product, and $I_s$ is the
$s\times s$ identity matrix.  Since
\[
  (C\otimes I_s)_{(k,j),(k',q)}
  =\frac{2}{k+k'}\mathbf{1}\{j=q\},
\]
the identity factor matches the same label on the two sides.  Consequently,
for nonempty $A,B$,
\begin{align*}
  \left[P(C\otimes I_s)P^\top\right]_{A,B}
  &=\sum_{j\in A\cap B}\frac{2}{|A|+|B|}\\
  &=\frac{2|A\cap B|}{|A|+|B|}=K_{A,B}.
\end{align*}
Hence
\begin{equation}\label{eq:score-factorization}
  K=P(C\otimes I_s)P^\top.
\end{equation}

\paragraph{Step 2: Compute the score ranks.}
Group the rows and columns of $P$ by the cardinality $k$.  Rows indexed by
$k$-subsets are nonzero only in columns $(k,j)$, so $P$ is block diagonal
with blocks $W_1,\ldots,W_s$.  Lemma~\ref{lem:incidence-rank} therefore gives
rank $s$ to each of the first $s-1$ blocks and rank $1$ to the final block.
Consequently,
\[
  \rank(P)
  =\sum_{k=1}^s\rank(W_k)
  =s(s-1)+1
  =s^2-s+1=\rho.
\]
Lemma~\ref{lem:cauchy-pd} gives $C\succ0$, and hence
$C\otimes I_s\succ0$.  Applying \eqref{eq:positive-middle} to
\eqref{eq:score-factorization} yields
\begin{equation}\label{eq:K-rank-range}
  \rank(K)=\rank(P)=\rho,
  \qquad
  \range(K)=\range(P).
\end{equation}
The scalar block $1$ in \eqref{eq:F-block} is disjoint from $K$, so it adds
one to the rank:
\[
  \rank(F)=1+\rank(K)=\rho+1=s^2-s+2.
\]
Finally, $L=J-F$, so $L-J=-F$ and
\[
  \rank(L-J)=\rank(F)=s^2-s+2.
\]

\paragraph{Step 3: Compute the ordinary rank of the loss matrix.}
The preceding identity does not yet determine $\rank(L)$, because adding the
rank-one matrix $J$ can change rank.  We instead calculate $\ker L$.

Let $\one_+:=\one_{N-1}$ and define $\omega\in\R^{s^2}$ by
$\omega_{(k,j)}=1/k$.  If $A$ is nonempty and $|A|=k$, then
\[
  (P\omega)_A=\sum_{j\in A}\frac1k=1.
\]
Thus $P\omega=\one_+$, so \eqref{eq:K-rank-range} implies
$\one_+\in\range(K)$.  Choose $u$ satisfying
\begin{equation}\label{eq:Ku-oneplus}
  Ku=\one_+.
\end{equation}
Subtracting the block matrix \eqref{eq:F-block} from $J$ gives
\begin{equation}\label{eq:L-block}
  L=
  \begin{pmatrix}
    0&\one_+^\top\\
    \one_+&\one_+\one_+^\top-K
  \end{pmatrix}.
\end{equation}

Take $(\alpha,x)\in\R\times\R^{N-1}$.  From
$L\binom{\alpha}{x}=0$, the first block row gives $\one_+^\top x=0$.  Using
this in the second block row gives
\[
  \alpha\one_++\one_+(\one_+^\top x)-Kx=0
  \quad\Longrightarrow\quad
  Kx=\alpha\one_+.
\]
Together with \eqref{eq:Ku-oneplus}, this means
$K(x-\alpha u)=0$, so
\[
  x=\alpha u+z
  \qquad\text{for some }z\in\ker K.
\]
Because $K$ is symmetric, $\one_+=Ku$ and $Kz=0$ imply
\[
  \one_+^\top z=u^\top Kz=0.
\]
Therefore the remaining equation $\one_+^\top x=0$ becomes
\begin{equation}\label{eq:alpha-oneplus-u}
  0=\one_+^\top x=\alpha\one_+^\top u.
\end{equation}
The factorization \eqref{eq:score-factorization} shows that $K\succeq0$, and
\[
  \one_+^\top u=u^\top Ku>0.
\]
Indeed, if $u^\top Ku=0$ for a positive-semidefinite $K$, then $Ku=0$,
contradicting \eqref{eq:Ku-oneplus}.  Equation
\eqref{eq:alpha-oneplus-u} now forces
$\alpha=0$, after which $x=z\in\ker K$.  Conversely, every
$(0,z)$ with $z\in\ker K$ lies in $\ker L$.  We have proved
\[
  \ker L=\{0\}\times\ker K.
\]
Thus $L$ has exactly the same nullity as $K$, but it has one additional row
and column.  Hence
\[
  \rank(L)=1+\rank(K)=\rho+1=s^2-s+2.
\]

\paragraph{Step 4: Compute the affine dimension.}
For any matrix $M$ with $m$ columns, augmenting each column with a final
coordinate equal to one converts affine dependence into linear dependence.
Therefore
\begin{equation}\label{eq:affine-rank}
  \affdim(M)
  =\rank\begin{pmatrix}M\\\one_m^\top\end{pmatrix}-1.
\end{equation}
Equation \eqref{eq:Ku-oneplus} and the block form \eqref{eq:F-block} show that
\[
  F\binom{1}{u}=\binom{1}{Ku}=\binom{1}{\one_+}=\one_N.
\]
Thus $\one_N\in\range(F)$.  Since $F$ is symmetric,
$\one_N^\top$ already belongs to the row space of $F$.  Appending that row
therefore does not increase rank, and \eqref{eq:affine-rank} gives
\[
  \affdim(F)=\rank(F)-1=s^2-s+1.
\]
\vspace{0.5\baselineskip}\par\noindent
Finally, the invertible affine map $x\mapsto\one_N-x$ sends every score
column to the corresponding loss column.  Invertible affine maps preserve
affine dimension, so
\[
  \affdim(L)=\affdim(F)=s^2-s+1.
\]
\end{proof}

\subsection{Proof of Corollary~\ref{cor:upper}}
\label{app:proof-upper}

\begin{proof}[Proof of Corollary~\ref{cor:upper}]
The CC-dimension bound follows from \eqref{eq:ccdim-upper} and
Theorem~\ref{thm:exact-rank}.
\end{proof}

\subsection{Verification of Remark~\ref{rem:empty-convention}}
\label{app:proof-empty-convention}

\begin{proof}[Verification of Remark~\ref{rem:empty-convention}]
Let $K$ be the nonempty score block from \eqref{eq:F-block}, and write
$\one_+=\one_{N-1}$ again.  Under the alternative convention,
\[
  F=\begin{pmatrix}0&0\\0&K\end{pmatrix},
  \qquad
  L=\begin{pmatrix}
      1&\one_+^\top\\
      \one_+&\one_+\one_+^\top-K
    \end{pmatrix}.
\]
Thus $\rank(F)=\rank(K)=s^2-s+1$, and $L-J=-F$ has the same rank.

Equation~\eqref{eq:Ku-oneplus} gives $\one_+\in\range(K)$.  If
$L\binom{\alpha}{x}=0$, the first block row gives
$\alpha+\one_+^\top x=0$, while the second gives
\[
  (\alpha+\one_+^\top x)\one_+-Kx=0.
\]
Hence $Kx=0$.  Since $K$ is symmetric and $\one_+\in\range(K)$, one has
$\one_+^\top x=0$, and therefore $\alpha=0$.  Conversely, every
$(0,x)$ with $x\in\ker K$ lies in $\ker L$.  Consequently,
\[
  \ker L=\{0\}\times\ker K,
  \qquad
  \rank(L)=1+\rank(K)=s^2-s+2.
\]
\vspace{0.5\baselineskip}\par\noindent
Finally, the zero score column belongs to the family of columns of $F$, so
their affine hull has dimension equal to their linear span:
$\affdim(F)=\rank(F)=s^2-s+1$.  The invertible affine map
$x\mapsto\one_N-x$ sends the score columns to the loss columns, and hence
$\affdim(L)=s^2-s+1$ as well.
\end{proof}

\clearpage
\section{Proof of the quadratic CC-dimension lower bound}
\label{app:proof-ccdim-lower}

\begin{proof}[Proof of Theorem~\ref{thm:ccdim-lower}]
By \eqref{eq:ccdim-trigger-lower}, it is enough to construct a distribution
$p$ and a Bayes-optimal prediction $B_0$ such that, with
$R=|\operatorname{supp}(p)|$,
\[
  \mu_{Q_{B_0}^L}(p)=R-hn-1.
\]
Indeed, substituting this identity into \eqref{eq:ccdim-trigger-lower} gives
the desired lower bound $hn$.  The proof has five steps:
\begin{enumerate}
  \item construct $p$ with full support on outcome-cardinality layers
        $1,\ldots,h$;
  \item identify exactly which predictions are Bayes optimal under $p$;
  \item show that their loss columns, restricted to the support of $p$, have
        affine dimension $hn$;
  \item translate that affine dimension into the required feasible-subspace
        dimension; and
  \item simplify the resulting finite bound asymptotically and for
        $s\ge12$.
\end{enumerate}

Fix $T\subseteq[s]$ with $|T|=t$, and write
$O=[s]\setminus T$, so $|O|=n$.  We call $T$ the set of core labels and $O$
the set of optional labels.

\paragraph{Preliminary parameter bounds.}
Since $t=\lfloor s/3\rfloor$, one has $3t\le s\le3t+2$ and
$t\ge1$.  Hence $n=s-t\ge2t$.  Therefore
\[
  n^2-st=n^2-t(n+t)\ge t^2>0.
\]
Since $\lceil x\rceil-1<x$ for $x>0$, and since $st\ge3>1$ implies
$h=\lceil\sqrt{st}\rceil-1\ge1$,
\begin{equation}\label{eq:parameter-inequalities}
  1\le h<\sqrt{st}<n.
\end{equation}
In particular, $h\le n-1\le s-2$ and $h^2<st$.
The inequality $h^2<st$ will ensure that the desired inclusion marginals
lie strictly between zero and one, while $h\le n-1$ will provide enough
optional-set cardinalities for the dimension calculation.

\paragraph{Step 1: Construct a full-support witness distribution.}
The following marginals are chosen so that, in Step~2, every prediction
containing all core labels has the same conditional score.
For each $m\in[h]$, define
\begin{equation}\label{eq:alpha-beta}
  \alpha_m=\frac{m(m+t)}{t(m+s)},
  \qquad
  \beta_m=\frac{m}{m+s}.
\end{equation}
Let $x^{(m)}\in\R^s$ have coordinate $\alpha_m$ on $T$ and coordinate
$\beta_m$ on $O$.  Its coordinate sum is
\[
  t\alpha_m+n\beta_m
  =\frac{m(m+t)+nm}{m+s}=m.
\]
Moreover, both $\alpha_m$ and $\beta_m$ are positive,
$\beta_m<1$, and
\[
  \alpha_m<1
  \quad\Longleftrightarrow\quad
  m^2<st,
\]
which follows from $m\le h<\sqrt{st}$.  Hence $x^{(m)}$ lies in the
relative interior of the hypersimplex
\begin{equation}\label{eq:hypersimplex}
  \Delta(s,m)
  =\left\{x\in[0,1]^s:\sum_{i=1}^s x_i=m\right\}
  =\operatorname{conv}\{\chi_A:A\subseteq[s],\ |A|=m\}.
\end{equation}

Every relative-interior point of a finite polytope admits a convex
representation assigning positive weight to every vertex.  To see this,
let $\mathcal P$ be such a polytope and let $z$ be the uniform average of
its vertices.  For $x_\star\in\relint(\mathcal P)$, choose $\delta>0$
small enough that
$y=x_\star+\delta(x_\star-z)\in\mathcal P$.  Then
\[
  x_\star=\frac{1}{1+\delta}y+\frac{\delta}{1+\delta}z.
\]
Expanding $y$ as a convex combination of the vertices proves the claim.
It follows that, for each $m\in[h]$, writing
$\cY_m=\{A\subseteq[s]:|A|=m\}$, there are strictly positive weights
$\pi_m(A)$, $A\in\cY_m$, satisfying
\[
  x^{(m)}=\sum_{A\in\cY_m}\pi_m(A)\chi_A,
  \qquad \pi_m(A)>0,
  \qquad
  \sum_{A\in\cY_m}\pi_m(A)=1,
\]
so $\pi_m$ is a full-support distribution on the $m$-subsets.  Taking
coordinate $i$ gives
\begin{equation}\label{eq:layer-marginals}
  \sum_{\substack{A\in\cY_m\\i\in A}}\pi_m(A)
  =x_i^{(m)}
  =
  \begin{cases}
    \alpha_m,&i\in T,\\
    \beta_m,&i\in O.
  \end{cases}
\end{equation}
These are inclusion marginals, not an independence assumption.

Now draw $M$ uniformly from $[h]$ and, given $M=m$, draw the random outcome
$A\in\cY_m$ according to $\pi_m$.  Let $p$ be the marginal distribution of
$A$.  Its
support is exactly
\begin{equation}\label{eq:trigger-support}
  \cU=\{A\subseteq[s]:1\le |A|\le h\}.
\end{equation}
Indeed, every $m\in[h]$ has positive mixing probability and every $m$-subset
has positive probability under $\pi_m$, whereas no other cardinality layer is
sampled.  Thus $p_A>0$ exactly for $A\in\cU$.

\paragraph{Step 2: Identify the Bayes-optimal predictions.}
Fix a prediction $B\subseteq[s]$ and write
\[
  a=|B\cap T|,
  \qquad c=|B\cap O|,
  \qquad b=|B|=a+c.
\]
Thus $a$ and $c$ count the core and optional labels in $B$, respectively.
Conditional on $M=m$, Equation~\eqref{eq:layer-marginals} and linearity of
expectation give
\[
  \mathbb E[|A\cap B|\mid M=m]
  =\sum_{i\in B}\Pr(i\in A\mid M=m)
  =a\alpha_m+c\beta_m.
\]
Moreover, $|A|=m$ on this conditional event, so the denominator of the
\Fone{} score is the constant $m+b$.  Consequently,
\begin{align}
  \mathbb E[F(A,B)\mid M=m]
  &=\frac{2(a\alpha_m+c\beta_m)}{m+b} \notag\\
  &=\frac{2m}{m+s}\,
    \frac{b+(a/t)m}{m+b}.                    \label{eq:conditional-score}
\end{align}
The last equality uses
\[
  a\alpha_m+c\beta_m
  =\frac{m}{m+s}\left(\frac{a(m+t)}t+c\right)
  =\frac{m}{m+s}\left(b+\frac at m\right).
\]

Since $B\cap T\subseteq T$, one has $0\le a\le t$.  We now consider the two
possible cases.

\emph{All core labels are included: $a=t$.}
In this case $a=t$ is equivalent to $T\subseteq B$.  Equation
\eqref{eq:conditional-score} becomes
\begin{equation}\label{eq:active-score}
  \mathbb E[F(A,B)\mid M=m]=\frac{2m}{m+s},
\end{equation}
because the second fraction in \eqref{eq:conditional-score} is
$(b+m)/(m+b)=1$.  Hence every prediction containing $T$ has the same score,
regardless of which optional labels it contains.

\emph{At least one core label is missing: $a<t$.}
Subtracting the score in \eqref{eq:conditional-score} from the common score
in \eqref{eq:active-score} gives
\begin{equation}\label{eq:strict-score-deficit}
  \frac{2m}{m+s}
  -\mathbb E[F(A,B)\mid M=m]
  =\frac{2m^2(t-a)}{t(m+s)(m+b)}>0.
\end{equation}
Thus a prediction missing a core label has a strictly smaller expected score
on every layer $m\in[h]$.

Because the loss equals $1-F$, minimizing expected loss is equivalent to
maximizing expected score.  Averaging over $M$ preserves both the tie among
predictions containing $T$ and the strict gap for every other prediction.
Finally, every prediction containing $T$ has the unique representation
$T\cup C$, where $C=B\cap O\subseteq O$.  Therefore the Bayes-optimal
predictions under $p$ are exactly
\begin{equation}\label{eq:active-reports}
  B_C=T\cup C,
  \qquad C\subseteq O.
\end{equation}
In particular, let $B_0:=B_\varnothing=T$.  Then
$p\in Q_{B_0}^L$.

\paragraph{Step 3: Compute the affine dimension of the tied columns.}
For $C\subseteq O$, put $c=|C|$ and let
\[
  g_C=\bigl(F(A,B_C)\bigr)_{A\in\cU}.
\]
Because $L_{\cU,B_C}=\one_{\cU}-g_C$, it suffices to calculate the affine
dimension of the score vectors $g_C$.  The dimension count has a simple
interpretation.  There are $h$ outcome-cardinality layers.  On each layer,
changing $C$ produces $n-1$ directions that exchange optional labels while
keeping $|C|$ fixed, plus one direction that changes $|C|$.  We will make
this decomposition rigorous and obtain
\[
  h(n-1)+h=hn.
\]

\paragraph{Step 3a: Encode the score columns by coefficient matrices.}
For $c=0,\ldots,n$, set
\begin{equation}\label{eq:cauchy-vectors}
  \gamma_c=\left(\frac{1}{m+t+c}\right)_{m=1}^h\in\R^h.
\end{equation}
We use an $h\times(n+1)$ matrix to store coefficients.  Its rows are indexed
by the outcome size $m\in[h]$.  Column $0$ stores the common coefficient for
all core labels, and the remaining columns are indexed by the optional labels
$j\in O$.

Let $e_0,\widehat{\chi}_C\in\R^{n+1}$, where $e_0$ is one in column $0$ and
zero elsewhere, while $\widehat{\chi}_C$ is zero in column $0$ and is the
incidence vector of $C$ on the optional columns.  Define the coefficient matrix
\begin{equation}\label{eq:coefficient-matrix}
  V_C=\gamma_c(e_0+\widehat{\chi}_C)^\top\in\R^{h\times(n+1)}.
\end{equation}
This is an outer product.  Explicitly, if $c=|C|$, then
\[
  (V_C)_{m,0}=\frac1{m+t+c},
  \qquad
  (V_C)_{m,j}=\frac{\mathbf{1}\{j\in C\}}{m+t+c}
  \quad(j\in O).
\]

Define the linear map $\Psi:\R^{h\times(n+1)}\to\R^{\cU}$ by
\begin{equation}\label{eq:Psi-map}
  [\Psi(Z)](A)
  =Z_{m,0}|A\cap T|
   +\sum_{j\in O}Z_{m,j}\mathbf{1}\{j\in A\},
  \qquad m=|A|.
\end{equation}
Thus $\Psi$ converts a coefficient matrix into a vector indexed by the
outcomes in $\cU$.  For each $A\in\cU$, the coordinate $[\Psi(Z)](A)$ is
computed using row $|A|$ of $Z$.
If $A\in\cU$ and $m=|A|$, then
\[
  [\Psi(V_C)](A)
  =\frac{|A\cap T|+|A\cap C|}{m+t+c}
  =\frac{|A\cap B_C|}{|A|+|B_C|}
  =\frac12 F(A,B_C).
\]
Therefore
\[
  g_C=2\Psi(V_C).
\]

\paragraph{Step 3b: Check that the encoding loses no dimensions.}
To prove that $\Psi$ loses no dimensions, it is enough to prove that its
kernel is trivial.  Suppose, therefore, that $\Psi(Z)=0$.  We will show one
row of $Z$ at a time to be zero.

Fix $m\in[h]$.  Turn row $m$ into one coefficient $\zeta_j$ for each
label $j\in[s]$ by setting
\[
  \zeta_j=
  \begin{cases}
    Z_{m,0}, & j\in T,\\
    Z_{m,j}, & j\in O.
  \end{cases}
\]
Thus all core labels receive the common coefficient stored in column $0$,
while each optional label receives the coefficient in its own column.
Every $m$-subset $A$ belongs to $\cU$ because $m\in[h]$.  Therefore
Equation~\eqref{eq:Psi-map} now has the simpler form
\begin{equation}\label{eq:m-subset-zero-sum}
  \sum_{j\in A}\zeta_j=0
\end{equation}
for every $A\subseteq[s]$ with $|A|=m$.

We first show that any two label coefficients are equal.  Fix distinct labels
$i,j\in[s]$.  After removing $i$ and $j$, there are $s-2$ labels left.
Because $m\le h\le s-2$, we may choose
\[
  A_{ij}^{\circ}\subseteq[s]\setminus\{i,j\},
  \qquad |A_{ij}^{\circ}|=m-1;
\]
when $m=1$, take $A_{ij}^{\circ}=\varnothing$.  The sets
$A_{ij}^{\circ}\cup\{i\}$ and $A_{ij}^{\circ}\cup\{j\}$ are both
$m$-subsets, so
\eqref{eq:m-subset-zero-sum} gives
\[
  \sum_{k\in A_{ij}^{\circ}}\zeta_k+\zeta_i=0,
  \qquad
  \sum_{k\in A_{ij}^{\circ}}\zeta_k+\zeta_j=0.
\]
Subtracting these two equations cancels the common sum over
$A_{ij}^{\circ}$ and yields $\zeta_i-\zeta_j=0$.  Hence
$\zeta_i=\zeta_j$.  Since $i$ and $j$ were arbitrary, all label
coefficients have one common value, say
$\zeta_j=\zeta_\star$ for every $j\in[s]$.

Finally, apply \eqref{eq:m-subset-zero-sum} to any $m$-subset $A$.  It gives
\[
  0=\sum_{j\in A}\zeta_j=m\zeta_\star.
\]
Since $m\ge1$, we have $\zeta_\star=0$.  Therefore every coefficient assigned
from row $m$ is zero.  In particular, $Z_{m,j}=0$ for every $j\in O$; and,
because $T$ is nonempty ($t\ge1$), the coefficient of a core label gives
$Z_{m,0}=0$.  Thus row $m$ is zero.  The choice of $m\in[h]$ was arbitrary,
so every row of $Z$ is zero.  Consequently $Z=0$, the kernel of $\Psi$ is
trivial, and $\Psi$ is injective.

An injective linear map preserves the dimension of every subspace.  Define
the coefficient-matrix difference space
\begin{equation}\label{eq:D-definition}
  D=\operatorname{span}\{V_C-V_\varnothing:C\subseteq O\}.
\end{equation}
Since $g_C-g_\varnothing=2\Psi(V_C-V_\varnothing)$, we have
\[
  \operatorname{span}\{g_C-g_\varnothing:C\subseteq O\}=2\Psi(D),
  \qquad
  \dim\Psi(D)=\dim D.
\]
Thus it remains only to compute $\dim D$.

\paragraph{Step 3c: Count the label-swap directions.}
This step isolates changes that keep $|C|$ fixed and exchange only the
identities of the optional labels.  The argument has two parts.  First, at
each fixed cardinality $c$, these exchanges generate every zero-sum change
among the optional columns, with outcome-layer profile $\gamma_c$.  Second,
the profiles $\gamma_1,\ldots,\gamma_h$ form a basis of $\R^h$, so the
zero-sum change can
be chosen independently on each of the $h$ outcome-size layers.

\par\smallskip\noindent\emph{The one-layer swap space.}
The linear space of optional-coordinate changes that preserve their total
sum is
\[
  W_{\mathrm{swap}}=\{w\in\R^O:\one_O^\top w=0\},
  \qquad \dim W_{\mathrm{swap}}=n-1.
\]
Indeed, $W_{\mathrm{swap}}$ is the kernel of the nonzero summation map
$w\mapsto\one_O^\top w$ on $\R^O\cong\R^n$.

Across all $h$ layers, the full swap space is
\[
  \mathcal S
  =\left\{Z\in\R^{h\times(n+1)}:
      Z_{\cdot,0}=0_h,\quad
      Z_{\cdot,O}\one_O=0_h\right\}.
\]
The first condition makes the core column zero.  The second says that the
optional block of every row lies in $W_{\mathrm{swap}}$.  Hence $\mathcal S$
is naturally $h$ independent copies of $W_{\mathrm{swap}}$.  We now prove that
$\mathcal S\subseteq D$.

\emph{Fixed-cardinality swaps.}
Fix $c\in\{1,\ldots,n-1\}$.  If $C,C'\subseteq O$ both have size $c$,
then Equation~\eqref{eq:coefficient-matrix} shows that the common core term
$\gamma_c e_0^\top$ in $V_C$ and $V_{C'}$ cancels, giving
\begin{equation}\label{eq:fixed-cardinality-difference}
  V_C-V_{C'}
  =\gamma_c(\widehat{\chi}_C-\widehat{\chi}_{C'})^\top.
\end{equation}
Moreover, this matrix lies in $D$ because
\[
  V_C-V_{C'}
  =(V_C-V_\varnothing)-(V_{C'}-V_\varnothing),
\]
and the two matrices in parentheses are generators of $D$.

We next show that the optional-coordinate blocks of the incidence-vector
differences in \eqref{eq:fixed-cardinality-difference} generate all of
$W_{\mathrm{swap}}$.
Given
distinct $i,j\in O$, choose
\[
  C_0\subseteq O\setminus\{i,j\},
  \qquad |C_0|=c-1;
\]
this is possible because
$|O\setminus\{i,j\}|=n-2\ge c-1$ (and for $c=1$, take
$C_0=\varnothing$).  With $C=C_0\cup\{i\}$ and
$C'=C_0\cup\{j\}$, the common set $C_0$ cancels and
\[
  \widehat{\chi}_C-\widehat{\chi}_{C'}
  =\widehat{\chi}_{\{i\}}-\widehat{\chi}_{\{j\}}.
\]
Writing $e_i\in\R^O$ for the standard coordinate vector supported at label
$i$, the optional block of this difference is $e_i-e_j$.  These pairwise
coordinate differences span $W_{\mathrm{swap}}$.  Explicitly, after fixing
$j_\ast\in O$,
every $w\in W_{\mathrm{swap}}$ satisfies
\[
  w=\sum_{i\in O\setminus\{j_\ast\}}
       w_i(e_i-e_{j_\ast}),
\]
where the coefficient at $j_\ast$ is correct because
$w_{j_\ast}=-\sum_{i\ne j_\ast}w_i$.  Therefore
\eqref{eq:fixed-cardinality-difference} and linearity of $D$ imply
\begin{equation}\label{eq:fixed-c-swap-span}
  \widetilde w:=(0,w)\in\R^{n+1},
  \qquad \gamma_c\widetilde w^\top\in D
  \quad (w\in W_{\mathrm{swap}},\ 1\le c\le n-1).
\end{equation}
Equation~\eqref{eq:fixed-c-swap-span} defines the tilde explicitly:
the initial $0$ is the core coordinate, and $w$ occupies the $n$ optional
coordinates.  Thus $\gamma_c\widetilde w^\top$ has the same
$h\times(n+1)$ shape as the elements of $D$.

\emph{Combining the outcome-size layers.}
For a fixed $c$, Equation~\eqref{eq:fixed-c-swap-span} gives only matrices
whose variation across the $h$ rows follows the single profile $\gamma_c$.
To
combine these profiles, let
\[
  \Gamma=[\gamma_1\ \cdots\ \gamma_h]\in\R^{h\times h},
  \qquad
  \Gamma_{m,c}=\frac{1}{m+t+c}.
\]
The matrix $\Gamma$ is a Cauchy matrix: the row parameters $m+t$ are distinct,
the column parameters $c$ are distinct, and every denominator is nonzero.
More explicitly, the classical Cauchy determinant formula gives
\[
  \det \Gamma
  =
  \frac{
    \displaystyle\prod_{1\le m<m'\le h}(m'-m)
    \displaystyle\prod_{1\le c<c'\le h}(c'-c)
  }{
    \displaystyle\prod_{m=1}^h\prod_{c=1}^h(m+t+c)
  }.
\]
Every numerator factor is nonzero because the row and column parameters
are distinct, and every denominator factor is positive.  Thus
$\det \Gamma\ne0$, so $\Gamma$ is nonsingular and its columns
$\gamma_1,\ldots,\gamma_h$ form a basis of $\R^h$.
The assumption $h\le n-1$ ensures that the cardinalities
$c=1,\ldots,h$ used here are all available in
\eqref{eq:fixed-c-swap-span}.

Take any target matrix $Z\in\mathcal S$, and let
$Z_O=Z_{\cdot,O}\in\R^{h\times n}$ be its optional block.  Because
$\Gamma$ is
invertible, there is a unique coefficient matrix
\[
  \Xi=\Gamma^{-1}Z_O\in\R^{h\times n}
\]
whose rows we denote by $w_1^\top,\ldots,w_h^\top$.  Multiplying back by
$\Gamma$ gives
\begin{equation}\label{eq:swap-matrix-decomposition}
  Z_O=\Gamma\Xi=\sum_{c=1}^h \gamma_cw_c^\top.
\end{equation}
The zero-sum condition defining $\mathcal S$ is preserved when we pass from
$Z_O$ to $\Xi$:
\[
  \Xi\one_O=\Gamma^{-1}Z_O\one_O=0_h.
\]
The $c$th coordinate of $\Xi\one_O$ is $w_c^\top\one_O$; hence every
$w_c$ lies in $W_{\mathrm{swap}}$.  The core column of $Z$ is zero, so
restoring that column in \eqref{eq:swap-matrix-decomposition} gives
\[
  Z=\sum_{c=1}^h \gamma_c\widetilde w_c^\top.
\]
Each summand belongs to $D$ by Equation~\eqref{eq:fixed-c-swap-span}.
Since $Z\in\mathcal S$ was arbitrary, this proves the containment in
\eqref{eq:swap-subspace}.

For the dimension count, let
$W_{\mathrm{swap}}^h:=W_{\mathrm{swap}}\times\cdots\times
W_{\mathrm{swap}}$ denote the Cartesian product of $h$ copies of
$W_{\mathrm{swap}}$.
The linear map
\[
  Z\longmapsto\bigl(Z_{1,O}^\top,\ldots,Z_{h,O}^\top\bigr)
  \quad\text{from }\mathcal S\text{ to }W_{\mathrm{swap}}^h
\]
is an isomorphism.

Indeed, it is well defined because every optional row of
$Z\in\mathcal S$ belongs to $W_{\mathrm{swap}}$.  It is injective because
the optional rows determine $Z$: its remaining core column is zero by
definition of $\mathcal S$.  It is surjective because any tuple
$(u_1,\ldots,u_h)\in W_{\mathrm{swap}}^h$ produces a matrix in $\mathcal S$
by taking $u_m^\top$ as optional row $m$ and setting the core column to zero.
Consequently,
\begin{equation}\label{eq:swap-subspace}
  \mathcal S\subseteq D,
  \qquad \dim\mathcal S=h\dim W_{\mathrm{swap}}=h(n-1).
\end{equation}

\paragraph{Step 3d: Count the cardinality-change directions.}
Step 3c accounted for changes that keep $|C|$ fixed.  We now isolate the
directions obtained by changing $|C|$.  The argument has four parts: average
away the identities of the selected optional labels, rewrite each averaged
difference through a linear map, show that these differences span $h$
directions, and finally separate them from the label-swap space.

Embed the optional all-ones vector in coefficient space as
\[
  \widehat{\one}_O:=(0,\one_O^\top)^\top\in\R^{n+1}.
\]

\emph{3d(i): Average away the label identities.}
For $c=0,\ldots,n$, average $V_C$ over all $c$-subsets of $O$.  Since every
set in this average has cardinality $c$, the factor $\gamma_c$ is the same in
every summand.  Hence
\[
  \bar V_c
  =\gamma_c\left(
      e_0+\binom nc^{-1}\sum_{|C|=c}\widehat{\chi}_C
    \right)^\top.
\]
For a fixed optional label $j\in O$, exactly
$\binom{n-1}{c-1}$ of the $\binom nc$ sets of size $c$ contain $j$.
Thus, for $1\le c\le n$, the $j$th optional coordinate of the average
incidence vector is
\[
  \frac{\binom{n-1}{c-1}}{\binom nc}=\frac cn.
\]
For $c=0$, the average incidence vector is zero, which gives the same
formula.  Since this calculation is identical for every $j\in O$, we obtain
\begin{equation}\label{eq:averaged-matrix}
  \bar V_c
  =\binom nc^{-1}\sum_{|C|=c}V_C
  =\gamma_c\left(e_0+\frac cn\widehat{\one}_O\right)^\top.
\end{equation}
At $c=0$, there is only the empty set, so
$\bar V_0=V_\varnothing$.  It follows directly from
\eqref{eq:D-definition} that
\[
  \bar V_c-\bar V_0
  =\binom nc^{-1}\sum_{|C|=c}(V_C-V_\varnothing)
  \in D.
\]
Thus every averaged cardinality difference is already a direction in $D$.

\emph{3d(ii): Express each averaged difference through one vector.}
Expanding Equation~\eqref{eq:averaged-matrix} gives
\[
  \bar V_c-\bar V_0
  =(\gamma_c-\gamma_0)e_0^\top
   +\frac cn \gamma_c\widehat{\one}_O^\top.
\]
The first term depends on $\gamma_c-\gamma_0$, but the second apparently
depends on $c\gamma_c$.  The following identity expresses both terms using
the single vector $\gamma_c-\gamma_0$.

Let $\Lambda=\operatorname{diag}(t+1,\ldots,t+h)$.  In row $m$,
\[
  [\Lambda\gamma_c]_m
  =\frac{m+t}{m+t+c}
  =1-\frac{c}{m+t+c}
  =1-c\gamma_c(m),
\]
whereas $[\Lambda\gamma_0]_m=1$.  Therefore
\[
  \Lambda(\gamma_c-\gamma_0)=-c\gamma_c.
\]
This motivates the linear map
\[
  \Phi:\R^h\longrightarrow\R^{h\times(n+1)},
  \qquad
  \Phi(x)=x e_0^\top-\frac1n(\Lambda x)\widehat{\one}_O^\top.
\]
Its core column is $x$, and every optional column is
$-(1/n)\Lambda x$.  Substituting $x=\gamma_c-\gamma_0$ gives
\[
  \Phi(\gamma_c-\gamma_0)
  =(\gamma_c-\gamma_0)e_0^\top
   +\frac cn \gamma_c\widehat{\one}_O^\top.
\]
Comparison with the expanded averaged difference proves
\begin{equation}\label{eq:Phi-identity}
  \bar V_c-\bar V_0=\Phi(\gamma_c-\gamma_0).
\end{equation}

\emph{3d(iii): Obtain $h$ independent cardinality directions.}
Consider the matrix whose columns are the first $h$ difference vectors:
\[
  \Gamma^\Delta
  =[\gamma_1-\gamma_0\ \cdots\ \gamma_h-\gamma_0]
  \in\R^{h\times h}.
\]
Its entries are
\[
  \Gamma^\Delta_{m,c}=\gamma_c(m)-\gamma_0(m)
  =-\frac{c}{(m+t)(m+t+c)},
\]
and hence
\[
  \Gamma^\Delta=-\Lambda^{-1}\Gamma\Lambda_{\mathrm{card}},
\]
where
\[
  \Lambda_{\mathrm{card}}=\operatorname{diag}(1,\ldots,h).
\]
Both $\Lambda^{-1}$ and $\Lambda_{\mathrm{card}}$ are invertible, and the
Cauchy matrix $\Gamma$ was shown to be nonsingular in Step 3c.  Thus
$\Gamma^\Delta$ is nonsingular, so
$\gamma_1-\gamma_0,\ldots,\gamma_h-\gamma_0$ form a basis of $\R^h$.

Define
\[
  \mathcal H=\Phi(\R^h).
\]
For each $c=1,\ldots,h$, Equation~\eqref{eq:Phi-identity} and part 3d(i)
show that $\Phi(\gamma_c-\gamma_0)\in D$.  Since the vectors
$\gamma_c-\gamma_0$ form a basis of $\R^h$, any $x\in\R^h$ has coefficients
$\eta_1,\ldots,\eta_h$ such that
\[
  x=\sum_{c=1}^h\eta_c(\gamma_c-\gamma_0).
\]
By linearity and Equation~\eqref{eq:Phi-identity},
\[
  \Phi(x)
  =\sum_{c=1}^h\eta_c\Phi(\gamma_c-\gamma_0)
  =\sum_{c=1}^h\eta_c(\bar V_c-\bar V_0)
  \in D.
\]
Therefore $\mathcal H\subseteq D$.

Moreover, $\Phi$ is injective: if $\Phi(x)=0$, then its core column, which
equals $x$, must be zero.  Consequently
\[
  \dim\mathcal H=\dim\R^h=h.
\]

\emph{3d(iv): Separate cardinality changes from label swaps.}
Every matrix in $\mathcal S$ has zero core column.  On the other hand, the
core column of $\Phi(x)\in\mathcal H$ is $x$.  Thus, if
$Z\in\mathcal S\cap\mathcal H$, then $Z=\Phi(x)$ for some $x$, and the zero
core column of $Z$ forces $x=0$ and hence $Z=0$.  Therefore
\[
  \mathcal S\cap\mathcal H=\{0\}.
\]
We have therefore found $h$ cardinality-change directions in $D$ that are
linearly independent of the $h(n-1)$ label-swap directions in $\mathcal S$.

\paragraph{Step 3e: Show that there are no other directions.}
Steps 3c and 3d showed that $\mathcal S\subseteq D$,
$\mathcal H\subseteq D$, and $\mathcal S\cap\mathcal H=\{0\}$.  To finish
the dimension calculation, it remains to prove the reverse inclusion
$D\subseteq\mathcal S+\mathcal H$.  Since $D$ is defined as the span of the
matrices $V_C-V_\varnothing$, it is enough to decompose one such generator.

\emph{3e(i): Decompose each generator into two parts.}
Fix $C\subseteq O$ and write $c=|C|$.  Starting from the definition of
$V_C$, add and subtract the uniform optional-label term
$(c/n)\gamma_c\widehat{\one}_O^\top$:
\begin{equation}\label{eq:D-decomposition}
\begin{aligned}
  V_C-V_\varnothing
  &=\gamma_c(e_0+\widehat{\chi}_C)^\top-\gamma_0e_0^\top\\
  &=\gamma_c\left(\widehat{\chi}_C-
      \frac cn\widehat{\one}_O\right)^\top
    +(\gamma_c-\gamma_0)e_0^\top
    +\frac cn \gamma_c\widehat{\one}_O^\top\\
  &=\gamma_c\left(\widehat{\chi}_C-
      \frac cn\widehat{\one}_O\right)^\top
    +\Phi(\gamma_c-\gamma_0).
\end{aligned}
\end{equation}
The final equality is exactly the identity established in part 3d(ii).

\emph{3e(ii): Identify the two terms.}
Define
\[
  u_C=\widehat{\chi}_C-\frac cn\widehat{\one}_O.
\]
Both $\widehat{\chi}_C$ and $\widehat{\one}_O$ have zero core coordinate, so
$(u_C)_0=0$.
On the optional coordinates,
\[
  \sum_{j\in O}(u_C)_j
  =|C|-\frac cn|O|
  =c-\frac cn n
  =0.
\]
Consequently, the matrix $\gamma_cu_C^\top$ has zero core column and every row
has optional-coordinate sum zero.  By the definition of $\mathcal S$, this
shows
\[
  \gamma_cu_C^\top\in\mathcal S.
\]
The second term $\Phi(\gamma_c-\gamma_0)$ belongs to
$\mathcal H=\Phi(\R^h)$ by definition.  Equation~\eqref{eq:D-decomposition}
therefore places every generator $V_C-V_\varnothing$ in
$\mathcal S+\mathcal H$.  Taking their span gives
\[
  D\subseteq\mathcal S+\mathcal H.
\]

\emph{3e(iii): Prove equality and count the dimensions.}
The opposite inclusion follows from the earlier work:
Equation~\eqref{eq:swap-subspace} gives $\mathcal S\subseteq D$, and part
3d(iii) gives $\mathcal H\subseteq D$.  Because $D$ is a linear subspace,
these two containments imply
\[
  \mathcal S+\mathcal H\subseteq D.
\]
Combining the two inclusions yields $D=\mathcal S+\mathcal H$.  Part 3d(iv)
showed that $\mathcal S\cap\mathcal H=\{0\}$, so this sum is direct.
Using Equation~\eqref{eq:swap-subspace} and the identity
$\dim\mathcal H=h$ from part 3d(iii), we obtain
\begin{equation}\label{eq:D-direct-sum}
  D=\mathcal S\oplus\mathcal H,
  \qquad
  \dim D=h(n-1)+h=hn.
\end{equation}

\emph{3e(iv): Return from coefficient matrices to loss columns.}
For any finite family of vectors, its affine dimension is the dimension of
the span of its differences from one fixed member.  Here the fixed member
is the column indexed by $B_0=B_\varnothing$.  For every $C\subseteq O$,
the identities $L_{\cU,B_C}=\one_\cU-g_C$ and
$g_C=2\Psi(V_C)$ give
\[
  L_{\cU,B_C}-L_{\cU,B_0}
  =-(g_C-g_\varnothing)
  =-2\Psi(V_C-V_\varnothing).
\]
Because $D$ is the span of the matrices $V_C-V_\varnothing$ and $\Psi$ is
linear, this implies
\[
  \operatorname{span}\{L_{\cU,B_C}-L_{\cU,B_0}:C\subseteq O\}
  =-2\Psi(D).
\]
Multiplication by the nonzero scalar $-2$ does not change dimension, and
$\Psi$ is injective by Step 3b.  Therefore
\[
\begin{aligned}
  \affdim\{L_{\cU,B_C}:C\subseteq O\}
  &=\dim\operatorname{span}
    \{L_{\cU,B_C}-L_{\cU,B_0}:C\subseteq O\}\\
  &=\dim\Psi(D)\\
  &=\dim D\\
  &=hn.
\end{aligned}
\]

\paragraph{Step 4: Convert affine dimension into the CC-dimension bound.}
We now translate the $hn$ independent active loss differences into the
feasible-subspace lower bound \eqref{eq:ccdim-trigger-lower}.  The argument
has four parts: define the relevant distribution perturbations, characterize
the two-sided feasible subspace, count its dimension, and apply the bound.

\emph{4(i): Define the perturbation and identify the active constraints.}
Let $\mathcal Q=Q_{B_0}^L$, where $B_0=T$.  For any prediction $B$, write
\[
  d_B=L_{\cdot,B}-L_{\cdot,B_0}.
\]
The trigger set can then be written as
\[
  \mathcal Q=\{q\in\Delta_N:q^\top d_B\ge0
       \text{ for every prediction }B\}.
\]
A vector $v=(v_A)_{A\in\cY}\in\R^N$ is a \emph{two-sided feasible
direction} at $p$ if there is an $\varepsilon_0>0$ such that
\[
  p+\varepsilon v\in\mathcal Q
  \quad\text{and}\quad
  p-\varepsilon v\in\mathcal Q
  \qquad(0\le\varepsilon\le\varepsilon_0).
\]
Denote the subspace of all such directions by
$\operatorname{lin}_{\mathcal Q}(p)\subseteq\R^N$.
Thus $v_A$ is an infinitesimal change in the probability assigned to outcome
$A$.  This vector $v\in\R^N$ is unrelated to the optional-label vector
$w\in W_{\mathrm{swap}}\subseteq\R^O$ used in Step 3c.

The support of $p$ is exactly $\cU$, and $p_A>0$ for every $A\in\cU$.
The active loss-comparison inequalities at $p$ are precisely those for the
predictions $B_C$, $C\subseteq O$, because these predictions are tied with
$B_0$.  Every other comparison is strict by
\eqref{eq:strict-score-deficit}.

\emph{4(ii): Characterize the two-sided feasible directions.}
We derive one condition from each type of constraint.

\emph{Nonnegativity.}
If $A\notin\cU$, then $p_A=0$.  Two-sided feasibility requires
\[
  p_A+\varepsilon v_A=\varepsilon v_A\ge0,
  \qquad
  p_A-\varepsilon v_A=-\varepsilon v_A\ge0.
\]
Both inequalities can hold only when $v_A=0$.  Hence
$v_{\cU^c}=0$.  If $A\in\cU$, then $p_A>0$, so both signs remain
nonnegative for all sufficiently small $\varepsilon$.

\emph{Normalization.}
The vectors $p\pm\varepsilon v$ must have total mass one.  Since
$\one_N^\top p=1$, this is equivalent to $\one_N^\top v=0$.  Using
$v_{\cU^c}=0$, the condition becomes
$\one_\cU^\top v_\cU=0$.

\emph{Active Bayes ties.}
For $C\subseteq O$, the comparison with $B_C$ is active, so
$p^\top d_{B_C}=0$.  Feasibility in the two signs requires
\[
  (p\pm\varepsilon v)^\top d_{B_C}
  =\pm\varepsilon v^\top d_{B_C}\ge0.
\]
This holds for both signs exactly when $v^\top d_{B_C}=0$.  Because
$v_{\cU^c}=0$, this is equivalent to orthogonality of $v_\cU$ to the
restricted active difference vector
$L_{\cU,B_C}-L_{\cU,B_0}$.

We have proved that every two-sided feasible direction satisfies the three
conditions on the right-hand side of
\begin{equation}\label{eq:feasible-directions}
  \operatorname{lin}_{\mathcal Q}(p)
  =\left\{v\in\R^N:
    \begin{array}{l}
      v_{\cU^c}=0,\\
      \one_{\cU}^\top v_{\cU}=0,\\
      v_{\cU}\perp
      \operatorname{span}\{L_{\cU,B_C}-L_{\cU,B_0}:C\subseteq O\}
    \end{array}
  \right\}.
\end{equation}
It remains to verify the reverse inclusion in this display.  Suppose $v$
satisfies the three stated conditions.  The first two keep
$p\pm\varepsilon v$ in the simplex for all sufficiently small
$\varepsilon$: coordinates in $\cU^c$ remain zero, coordinates in $\cU$
remain nonnegative because they start strictly positive, and total mass
remains one.  Every active comparison remains an equality by the third
condition.  Each inactive comparison has a strictly positive margin at
$p$; since there are finitely many predictions, all of these margins remain
positive for both signs when $\varepsilon$ is sufficiently small.  Thus
Equation~\eqref{eq:feasible-directions} describes exactly the two-sided
feasible subspace.

\emph{4(iii): Count the dimension of this subspace.}
Let
\[
  E=\operatorname{span}
  \{L_{\cU,B_C}-L_{\cU,B_0}:C\subseteq O\}.
\]
Put
\[
  R=|\cU|=\|p\|_0,
  \qquad d_E=\dim E=hn,
\]
where the last equality is the conclusion of Step 3e.
After the condition $v_{\cU^c}=0$, the free vector $v_\cU$ lies in
$\R^R$.  Equation~\eqref{eq:feasible-directions} says precisely that
$v_\cU$ is orthogonal both to $E$ and to $\one_\cU$.  Consequently,
restriction to the coordinates in $\cU$ identifies the feasible subspace
with
\[
  \operatorname{lin}_{\mathcal Q}(p)
  \cong
  \bigl(E+\operatorname{span}\{\one_\cU\}\bigr)^\perp
  \subseteq\R^R.
\]

It remains to check that the normalization vector $\one_\cU$ contributes
one constraint independent of the $d_E$ constraints represented by $E$.
Indeed,
for every active prediction $B_C$,
\[
  p_\cU^\top(L_{\cU,B_C}-L_{\cU,B_0})
  =p^\top(L_{\cdot,B_C}-L_{\cdot,B_0})
  =0,
\]
because $B_C$ and $B_0$ are tied under $p$.  Hence the entire active
difference-vector span is orthogonal to $p_\cU$.  In contrast,
\[
  p_\cU^\top\one_\cU=1,
\]
so $\one_\cU$ is not in that span.  The $d_E$ active-difference constraints
and the one normalization constraint consequently have total rank $d_E+1$.
Taking the orthogonal complement in the $R$-dimensional space gives
\begin{equation}\label{eq:mu-calculation}
  \mu_{Q_{B_0}^L}(p)
  =\dim\operatorname{lin}_{\mathcal Q}(p)
  =R-(d_E+1)
  =R-d_E-1.
\end{equation}

\emph{4(iv): Apply the feasible-subspace lower bound.}
Substitute $\|p\|_0=R$ and
Equation~\eqref{eq:mu-calculation} into
\eqref{eq:ccdim-trigger-lower}:
\[
  \CCdim(L^{F_1})
  \ge \|p\|_0-\mu_{Q_{B_0}^L}(p)-1
  =R-(R-d_E-1)-1=d_E=hn.
\]

\paragraph{Step 5: Derive the stated numerical bounds.}
Finally,
\[
  t=\frac{s}{3}+O(1),
  \qquad n=\frac{2s}{3}+O(1),
  \qquad h=\frac{s}{\sqrt3}+O(1),
\]
which proves \eqref{eq:ccdim-asymptotic-lower}.  If $s\ge12$, then
$t\ge s/4$, $n\ge2s/3$, and
\[
  h\ge\sqrt{st}-1\ge\frac{s}{2}-1\ge\frac{s}{3},
\]
so $hn\ge2s^2/9$.
\end{proof}


\clearpage
\begin{thebibliography}{99}
\normalsize
\setlength{\baselineskip}{11pt}
\setlength{\itemsep}{2pt plus 1pt minus 1pt}
\setlength{\parsep}{0pt}

\bibitem[Agarwal and Agarwal(2015)]{agarwal2015property}
Arpit Agarwal and Shivani Agarwal.
\newblock On consistent surrogate risk minimization and property elicitation.
\newblock In \emph{Proceedings of the 28th Conference on Learning Theory},
  volume 40 of PMLR, pages 4--22, 2015.

\bibitem[Dembczy\'{n}ski et~al.(2011)Dembczy\'{n}ski, Waegeman, Cheng, and
  H\"ullermeier]{dembczynski2011exact}
Krzysztof~J. Dembczy\'{n}ski, Willem Waegeman, Weiwei Cheng, and Eyke
  H\"ullermeier.
\newblock An exact algorithm for F-measure maximization.
\newblock In \emph{Advances in Neural Information Processing Systems 24},
  pages 1404--1412, 2011.

\bibitem[Dembczy\'{n}ski et~al.(2013)Dembczy\'{n}ski, Jachnik, Kot\l{}owski,
  Waegeman, and H\"ullermeier]{dembczynski2013optimizing}
Krzysztof Dembczy\'{n}ski, Arkadiusz Jachnik, Wojciech Kot\l{}owski, Willem
  Waegeman, and Eyke H\"ullermeier.
\newblock Optimizing the F-measure in multi-label classification: Plug-in rule
  approach versus structured loss minimization.
\newblock In \emph{Proceedings of the 30th International Conference on Machine
  Learning}, volume 28 of PMLR, pages 1130--1138, 2013.

\bibitem[Finocchiaro et~al.(2020)Finocchiaro, Frongillo, and
  Waggoner]{finocchiaro2020embedding}
Jessie Finocchiaro, Rafael Frongillo, and Bo Waggoner.
\newblock Embedding dimension of polyhedral losses.
\newblock In \emph{Proceedings of the 33rd Conference on Learning Theory},
  volume 125 of PMLR, pages 1558--1585, 2020.

\bibitem[Finocchiaro et~al.(2021)Finocchiaro, Frongillo, and
  Waggoner]{finocchiaro2021lower}
Jessica Finocchiaro, Rafael~M. Frongillo, and Bo Waggoner.
\newblock Unifying lower bounds on prediction dimension of convex surrogates.
\newblock In \emph{Advances in Neural Information Processing Systems 34},
  pages 22046--22057, 2021.

\bibitem[Finocchiaro et~al.(2024)Finocchiaro, Frongillo, and
  Waggoner]{finocchiaro2024framework}
Jessie Finocchiaro, Rafael~M. Frongillo, and Bo Waggoner.
\newblock An embedding framework for the design and analysis of consistent
  polyhedral surrogates.
\newblock \emph{Journal of Machine Learning Research}, 25(63):1--60, 2024.

\bibitem[Frongillo and Kash(2015)]{frongillo2015elicitation}
Rafael Frongillo and Ian~A. Kash.
\newblock On elicitation complexity.
\newblock In \emph{Advances in Neural Information Processing Systems 28},
  pages 3258--3266, 2015.

\bibitem[Koyejo et~al.(2015)Koyejo, Natarajan, Ravikumar, and
  Dhillon]{koyejo2015consistent}
Oluwasanmi~O. Koyejo, Nagarajan Natarajan, Pradeep~K. Ravikumar, and
  Inderjit~S. Dhillon.
\newblock Consistent multilabel classification.
\newblock In \emph{Advances in Neural Information Processing Systems 28},
  pages 3321--3329, 2015.

\bibitem[Nowak et~al.(2019)Nowak, Bach, and Rudi]{nowak2019sharp}
Alex Nowak, Francis Bach, and Alessandro Rudi.
\newblock Sharp analysis of learning with discrete losses.
\newblock In \emph{Proceedings of the Twenty-Second International Conference
  on Artificial Intelligence and Statistics}, volume 89 of PMLR,
  pages 1920--1929, 2019.

\bibitem[Ramaswamy and Agarwal(2012)]{ramaswamy2012classification}
Harish~G. Ramaswamy and Shivani Agarwal.
\newblock Classification calibration dimension for general multiclass losses.
\newblock In \emph{Advances in Neural Information Processing Systems 25},
  pages 2087--2095, 2012.

\bibitem[Ramaswamy and Agarwal(2016)]{ramaswamy2016ccdim}
Harish~G. Ramaswamy and Shivani Agarwal.
\newblock Convex calibration dimension for multiclass loss matrices.
\newblock \emph{Journal of Machine Learning Research}, 17(14):1--45, 2016.

\bibitem[Ramaswamy et~al.(2013)Ramaswamy, Agarwal, and
  Tewari]{ramaswamy2013lowrank}
Harish~G. Ramaswamy, Shivani Agarwal, and Ambuj Tewari.
\newblock Convex calibrated surrogates for low-rank loss matrices with
  applications to subset ranking losses.
\newblock In \emph{Advances in Neural Information Processing Systems 26},
  pages 1475--1483, 2013.

\bibitem[Ramaswamy et~al.(2014)Ramaswamy, Srinivasan~Babu, Agarwal, and
  Williamson]{ramaswamy2014output}
Harish~G. Ramaswamy, Balaji Srinivasan~Babu, Shivani Agarwal, and Robert~C.
  Williamson.
\newblock On the consistency of output code based learning algorithms for
  multiclass learning problems.
\newblock In \emph{Proceedings of the 27th Conference on Learning Theory},
  volume 35 of PMLR, pages 885--902, 2014.

\bibitem[Waegeman et~al.(2014)Waegeman, Dembczy\'{n}ski, Jachnik, Cheng, and
  H\"ullermeier]{waegeman2014bayes}
Willem Waegeman, Krzysztof Dembczy\'{n}ski, Arkadiusz Jachnik, Weiwei Cheng,
  and Eyke H\"ullermeier.
\newblock On the Bayes-optimality of F-measure maximizers.
\newblock \emph{Journal of Machine Learning Research}, 15(103):3513--3568,
  2014.

\bibitem[Wang(2026)]{wang2026revisiting}
Zixun Wang.
\newblock Revisiting F-measure optimization in multi-label classification:
  A sampling-based approach.
\newblock In \emph{Proceedings of the IEEE/CVF Conference on Computer Vision
  and Pattern Recognition}, pages 16845--16854, 2026.

\bibitem[Ye et~al.(2012)Ye, Chai, Lee, and Chieu]{ye2012optimizing}
Nan Ye, Kian Ming~A. Chai, Wee Sun Lee, and Hai Leong Chieu.
\newblock Optimizing F-measures: A tale of two approaches.
\newblock In \emph{Proceedings of the 29th International Conference on
  Machine Learning}, pages 289--296, 2012.

\bibitem[Zhang et~al.(2020)Zhang, Ramaswamy, and Agarwal]{zhang2020convex}
Mingyuan Zhang, Harish~G. Ramaswamy, and Shivani Agarwal.
\newblock Convex calibrated surrogates for the multi-label F-measure.
\newblock In \emph{Proceedings of the 37th International Conference on Machine
  Learning}, volume 119 of PMLR, pages 11246--11255, 2020.

\end{thebibliography}
\end{document}